\newlength{\defbaselineskip}
\def\math#1{$#1$}
\def\mand#1{$$#1$$}
\def\frac#1#2{{#1\over #2}}
\def\mld#1{\begin{equation}
#1
\end{equation}}
\def\x{{\mathbf x}}
\def\y{{\mathbf y}}
\def\X{{\mathbf X}}
\def\a{{\mathbf a}}
\def\b{{\mathbf b}}
\def\norm#1{{\|#1\|}}
\def\r#1{{(\ref{#1})}}
\def\dotfil{\leaders\hbox to 1.5mm{.}\hfill}
\newcounter{rmnum}
\def\RN#1{\setcounter{rmnum}{#1}\uppercase\expandafter{\romannumeral\value{rmnum}}}
\def\rn#1{\setcounter{rmnum}{#1}\expandafter{\romannumeral\value{rmnum}}}
\newcommand{\TNorm }[1]{\mbox{}\left\|#1\right\|_2  }
\newcommand{\TNormS}[1]{\mbox{}\left\|#1\right\|_2^2}
\newcommand{\setlinespacing}[1]%
           {\setlength{\baselineskip}{#1 \defbaselineskip}}
\newcommand{\abs }[1]{\left|#1\right|}
\newcommand{\mat}[1]{{\ensuremath{\bm{\mathrm{#1}}}}}
\def\a{{\bm \alpha}}
\def\w{{\mathbf w}}
\def\hw{{\mathbf{\hat{w}}}}
\def\ta{\tilde{\bm \alpha}}
\def\tw{\tilde{\mathbf w}}
\def\b{{\mathbf b}}
\def\e{{\mathbf e}}
\def\x{{\mathbf x}}
\def\matA{\mat{A}}
\def\matB{\mat{B}}
\def\matD{\mat{D}}
\def\matE{\mat{E}}
\def\matH{\mat{H}}
\def\matI{\mat{I}}
\def\matP{\mat{P}}
\def\matQ{\mat{Q}}
\def\matR{\mat{R}}
\def\matRhad{\matR_{\textsc{\tiny srht}}}
\def\matS{\mat{S}}
\def\matU{\mat{U}}
\def\matV{\mat{V}}
\def\matX{\mat{X}}
\def\matY{\mat{Y}}
\def\matZ{\mat{Z}}
\def\matSig{\mat{\Sigma}}
\newcommand{\remove}[1]{}
\begin{document}

\markboth{Paul et al.}{Random Projections for Linear Support Vector Machines}

\title{Random Projections for Linear Support Vector Machines}

\author{
SAURABH PAUL
\affil{
Rensselaer Polytechnic Institute
} 
CHRISTOS BOUTSIDIS
\affil{ 
IBM T.J. Watson Research Center  
} 
MALIK MAGDON-ISMAIL 
\affil {
Rensselaer Polytechnic Institute 
}
PETROS DRINEAS 
\affil{ 
Rensselaer Polytechnic Institute 
}
}

\begin{abstract}
Let $\matX$ be a data matrix of rank $\rho$, whose rows represent $n$ points in $d$-dimensional space. The linear support vector machine constructs a hyperplane separator that maximizes the 1-norm soft margin. We develop a new oblivious dimension reduction technique which is precomputed and can be applied to any input matrix ${\matX}$. We prove that, with high probability, the margin and minimum enclosing ball in the feature space are preserved to within \math{\epsilon}-relative error, ensuring comparable generalization as in the original space in the case of classification. For regression, we show that the margin is preserved to $\epsilon$-relative error with high probability. We present extensive experiments with real and synthetic data to support our theory.
\end{abstract}

\category{I.5.2}{Design Methodology}{Classifier Design and evaluation \and Feature evaluation and selection}
\category{G.1.6}{Optimization}{Quadratic programming models}
\category{G.1.0}{General}{Numerical Algorithms}

\terms{Algorithms, Experimentation, Theory}

\keywords{Classification, Dimensionality Reduction, Support Vector Machines}

\acmformat{Saurabh Paul, Christos Boutsidis, Malik Magdon-Ismail, Petros Drineas, 2013. Random Projections for Linear Support Vector Machines.}

\begin{bottomstuff}
A short version of this paper appeared in the 16th International Conference on Artificial Intelligence and Statistics (AISTATS 2013) \cite{PBMD13}. Note, that the short version of our paper \cite{PBMD13} does not include the details of the proofs, comparison of random projections with principal component analysis, extension of random projections for SVM regression in terms of both theory and experiments and experiments with fast SVM solver on RCV1 and Hapmap-HGDP datasets.

Christos Boutsidis acknowledges the support from XDATA program of the Defense Advanced Research Projects Agency (DARPA), administered through Air Force Research Laboratory contract FA8750-12-C-0323; Petros Drineas and Malik Magdon-Ismail are supported by NSF CCF-1016501 and NSF DMS-1008983; Saurabh Paul is supported by NSF CCF-916415. 

Author's addresses: S. Paul {and} M. Magdon-Ismail {and} P. Drineas, Computer Science Department,
Rensselaer Polytechnic Institute, pauls2@rpi.edu {and} \{magdon, drinep\}@cs.rpi.edu ; C. Boutsidis, Mathematical Sciences Department, IBM T.J. Watson Research Center, cboutsi@us.ibm.com.
\end{bottomstuff}

\maketitle

\section{Introduction}\label{sec:intro}

Support Vector Machines (SVM) \cite{Chris00}
are extremely popular in machine learning today. They have been used in both classification and regression.
For classification, the training
data set consists
of $n$ points $\x_i \in \mathbb{R}^d$,
with respective labels $y_i \in \{-1,+1\}$ for $i=1\ldots n$.
For linearly separable data, the primal form of the SVM learning problem
is to construct a hyperplane
\math{\w^*} which maximizes the geometric
\emph{margin}
(the minimum distance of a data point to the
hyperplane), while separating the data.
For non-separable data the ``soft'' 1-norm margin is maximized.
The dual lagrangian formulation of the classification problem leads to the
following quadratic program:
\begin{equation}
\begin{aligned}
& \max_{\{ \alpha_i \} } \sum_{i=1}^n \alpha_i  - \frac{1}{2} \sum_{i,j=1}^n \alpha_i \alpha_j y_i y_j \x_i^T\x_j \\
& \text{subject to} \sum_{i=1}^{n}y_i \alpha_i = 0, \\   
& 0 \leq  \alpha_i \leq C,  \; \; \; i=1\ldots n.
\end{aligned}
\label{eqn:svm1}
\end{equation}
In the above formulation, the unknown lagrange multipliers
 $\{\alpha_i\}_{i=1}^n$ are constrained to lie inside the ``box constraint'' \math{[0,C]^n},
 where $C$ is part of the input.
In order to measure the out-of-sample performance of the SVM classifier, we can use the VC-dimension of \emph{fat}-separators. Assuming that the data lie in a
ball of radius \math{B}, and that the hypothesis set consists of hyperplanes of width \math{\gamma} (corresponding to the margin), then the
\math{VC}-dimension of this hypothesis set is \math{O(B^2/\gamma^2)} \cite{Vapnik71}.
Now, given the in-sample error, we can obtain a bound for the out-of-sample error, which is monotonic in the VC-dimension \cite{Vapnik98}.

Analogous to the 1-norm soft margin formulation for SVM classification, we have a similar formulation for regression called the linear $\varepsilon$-insensitive loss SVM \cite{Chris00}. The dual problem for $\varepsilon$-insensitive loss SVM regression is formulated as:
%
\begin{equation}
\begin{aligned}
& \max \sum_{i=1}^n\alpha_iy_i - \varepsilon \sum_{i=1}^n \abs{\alpha_i} - \frac{1}{2} \sum_{i,j=1}^n \alpha_i\alpha_j \x_i^T\x_j \\
& \text{subject to} \sum_{i=1}^{n} \alpha_i  = 0, \\
& -C \leq  \alpha_i \leq C,  \; \; \; i=1\ldots n.
\end{aligned}
\label{eqn:svm4}
\end{equation}
\noindent Here, $\{\alpha\}_{i=1}^n$ are the Lagrange multipliers and they lie in the interval \math{[-C,C]^n.}

Intuitively, if one can preserve the subspace geometry, then one should be able to preserve the performance of a distance-based algorithm.
We construct dimension reduction matrices \math{\matR\in\mathbb{R}^{d \times r}} which  produce \math{r}-dimensional feature vectors \math{\tilde \x_i=\matR^T\x_i}; the matrices \math{\matR} \emph{do not} depend on the data.
We show that for the data in the dimension-reduced space, the margin of separability and the minimum enclosing ball radius are preserved, since the subspace geometry is preserved. So, an SVM with an
appropriate structure defined by the margin (width) of the hyperplanes \cite{Vapnik98}
will have comparable VC-dimension and, thus, generalization error. This is true for classification. The $\varepsilon$-insensitive loss SVM regression problem is an unbounded problem and as such, we are not able to infer anything related to the generalization error bounds: we can only infer the preservation of margin.

\subsection{Notation and SVM Basics}

$\matA, \matB, \ldots$ denote matrices and $\a, \b, \ldots$ denote column vectors; $\e_i$ (for all $i=1\ldots n$) is the standard basis, whose dimensionality will be clear from context; and $\matI_n$ is the  $n \times n$ identity matrix. The Singular Value Decomposition (SVD) of a matrix $\matA \in \mathbb{R}^{n \times d}$ of rank $\rho \leq \min\left\{n,d\right\}$ is equal to %
$ \matA = \matU \matSig \matV^T,$
where $\matU \in \mathbb{R}^{n \times \rho}$ is an orthogonal matrix containing the left singular vectors, $\matSig \in \mathbb{R}^{\rho \times \rho}$ is a diagonal matrix containing the singular values $\sigma_1 \geq \sigma_2  \geq \ldots \sigma_{\rho} > 0$, and $\matV \in \mathbb{R}^{d \times \rho}$ is a matrix containing the right singular vectors.
The spectral norm of $\matA$ is  $\TNorm{\matA} = \sigma_1$.
\label{subsec:svm_basics}
\subsubsection{SVM Classification}
Let $\matX \in \mathbb{R}^{n \times d}$ be the matrix whose rows are the vectors $\x_i^T$, $\matY \in \mathbb{R}^{n \times n}$ be the  diagonal matrix with entries  $\matY_{ii}=y_{i}$, and  $\a = \left[\alpha_1,\alpha_2,\ldots,\alpha_n\right] \in \mathbb{R}^n$ be
the vector of lagrange multipliers to be determined by solving eqn.~\r{eqn:svm1}. The SVM optimization
problem is
\begin{equation}
\begin{aligned}
& \max_{\a } \bm{1}^T\a-\frac12\a^T\matY\matX\matX^T\matY\a \\
& \text{subject to }  \bm 1^T \matY \a = 0;  \qquad \mbox{and} \qquad
\bm{0} \le\a \le \bm{C}.
\end{aligned}
\label{eqn:svm1A}
\end{equation}
(In the above, \math{\bm1,\ \bm0,\ \bm{C}} are  vectors with the implied constant entry.)
Let \math{\a^*} be an optimal solution of the above problem. The optimal separating hyperplane is given by $\w^* = \matX^T\matY\a^{*}= \sum_{i=1}^n y_i\alpha_i^* \x_i $, and the points $\x_i$ for which \math{\alpha_i^*>0}, i.e., the points which appear in the expansion \math{\w^*}, are the support vectors.
The geometric margin, \math{\gamma^*}, of this
canonical optimal hyperplane is \math{\gamma^*=1/\TNorm{\w^*}}, where
$\TNormS{\w^*} = \sum_{i=1}^n \alpha_i^*$.
The data radius is \math{B=\min_{\x^*}\max_{\x_i} \TNorm{\x_i-\x^*}}.
It is this \math{\gamma^*} and \math{B} that factor into the generalization
performance of the SVM through the ratio \math{B/\gamma^*}. It is worth noting, that our results hold for the separable case as well, which amounts to setting $C$ to a large value.
\subsubsection{SVM Regression}
Let $\matX \in \mathbb{R}^{n \times d}$ be the matrix whose rows are the vectors $\x_i^T$, $\y$ be the n-dimensional vector with the target entries, and  $\a = \left[\alpha_1,\alpha_2,\ldots,\alpha_n\right] \in \mathbb{R}^n$ be
the vector of lagrange multipliers to be determined by solving eqn.~\r{eqn:svm4}. The SVM optimization
problem is
\begin{equation}
\begin{aligned}
& \max_{\a} \y^T\a - \varepsilon\bm{1}^T\a - \frac12\a^T\matX\matX^T\a \\
& \text{subject to }  \bm 1^T \a = 0;  \qquad \mbox{and} \qquad
\bm{-C} \le\a \le \bm{C}.
\end{aligned}
\label{eqn:svm4A}
\end{equation}

\noindent \math{\bm1,\ \bm0,\ \bm{C}} are vectors with the implied constant entry. Let \math{\a^*} be an optimal solution of the above problem. The optimal separating hyperplane is given by $\w^* = \a^{*T}\matX= \sum_{i=1}^n\alpha_i^* \x_i $ and the points $\x_i$ for which \math{\alpha_i^*>0}, i.e., the points which appear in the expansion \math{\w^*}, are the support vectors. The geometric margin for regression is defined in the same way as it was done for classification.

\subsection{Dimension Reduction}
\label{subsec:dim_red}
Our goal is to study how the SVM performs under (linear) dimensionality reduction transformations in the feature space. Let $\matR \in \mathbb{R}^{d \times r}$ be the dimension reduction matrix that reduces the dimensionality of the input from \math{d} to  $r \ll d$. We will choose \math{\matR} to be a random projection matrix (see Section~\ref{subsec:prel}). The transformed dataset into \math{r} dimensions is given by \math{\tilde\matX=\matX\matR}, and the SVM optimization problem for classification becomes
\begin{equation}
\begin{aligned}
& \max_{\tilde\a } \bm{1}^T\tilde\a-\frac12\tilde\a^T\matY\matX\matR\matR^T
\matX^T\matY\tilde\a, \\
& \text{subject to } \bm1^T\matY\tilde\a=0, \qquad \mbox{and} \qquad
&  \bm0\le\tilde\a\le \bm{C}.
\end{aligned}
\label{eqn:svm2}
\end{equation}
For regression, the SVM optimization problem becomes
\begin{equation}
\begin{aligned}
& \max_{\tilde\a } \y^T\tilde\a - \varepsilon\bm{1}^T\tilde\a - \frac12\tilde\a^T\matX\matR\matR^T\matX^T\tilde\a \\
& \text{subject to }  \bm 1^T \tilde\a = 0;  \qquad \mbox{and} \qquad
\bm{-C} \le\tilde\a \le \bm{C}.
\end{aligned}
\label{eqn:svm5}
\end{equation}
\noindent
We will present a construction for $\matR$ that leverages the fast Hadamard transform. The running time needed to apply this construction to the original data matrix is $O\left( nd\log r\right)$. Notice that while this running time is nearly linear on the size of the original data, it does not take advantage of any sparsity in the input. In order to address this deficiency, we leverage the recent work of \citeN{Clark12}, \citeN{Meng13} and \citeN{NN13}, which proposes a construction for $\matR$ that can be applied to $\X$ in $O\left(nnz(\X) + poly\left(n\epsilon^{-1} \right) \right)$ time; here $nnz\left(\X \right)$ denotes the number of non-zero entries of $\X$ and $\rho$ is the rank of $\X$. To the best of our knowledge, this is the first independent implementation and evaluation of this potentially ground-breaking random projection technique (a few experimental results were presented by \citeN{Clark12}, \citeN{Meng13} and \citeN{NN13}). All constructions for $\matR$ are oblivious of the data and hence they can be precomputed. Also, the generalization bounds that depend on the final margin and radius of the data will continue to hold for classification, while the bound on the margin holds for regression.

The pratical intent of using linear SVM after random projections is to reduce computational complexity of training SVM and memory. For large-scale datasets, that are too big to fit into memory (see Section ~\ref{subsubsec:snp} for details), random projections serve as a possible way to estimate out-of-sample error. Random projections reduce the computational complexity of training SVM which is evident from the experiments described in Section ~\ref{sec:exp}.  

\subsection{Our Contribution}
\label{subsec:our_contri}
Our main theoretical results are to show that by solving the SVM optimization problem in the projected space, we get relative-error generalization performance for SVM classification and that, we preserve the margin upto relative error for SVM regression.
We briefly discuss the appropriate values of $r$, namely the dimensionality of the dimensionally-reduced problem. If $\matR$ is the matrix of the randomized Hadamard transform (see Section~\ref{subsec:prel} for details), then given $\epsilon\in(0,1/2]$ and $\delta\in\left(0,1\right)$ we set
\begin{equation}\label{eqn:eqfht}
r=O\left(\rho\epsilon^{-2}\cdot\log\left(\rho d\delta^{-1}\right)\cdot\log\left(\rho\epsilon^{-2}\delta^{-1}\log\left(\rho d\delta^{-1}\right)\right)\right).
\end{equation}
The running time needed to apply the randomized Hadamard transform is \math{O\left(nd\log r\right)}.
If $\matR$ is constructed as described in~\citeN{Clark12},~\citeN{Meng13} and \citeN{NN13}, then given  $\epsilon\in\left(0,1\right)$ and $\delta\in\left(0,1\right)$ we set
\begin{equation} \label{eqn:eqcw}
r=O\left(\rho\epsilon^{-4}\log\left(\rho/\delta\epsilon\right) \left(\rho+\log\left(1/\delta\epsilon\right)\right)\right).
\end{equation}
The running time needed to apply this transform is \math{O\left(nnz(\X)+poly\left(n\epsilon^{-1}\right)\right)}. If $\matR$ is a random sign-matrix, then given $\epsilon\in(0,1/2]$ and we set
\begin{equation}\label{eqn:eqrs}
r=O\left(\rho\epsilon^{-2}\log\rho\log d\right).
\end{equation}
The running time needed to apply this transform is equal to $O\left(ndr\right)$. Finally if $\matR$ is a random gaussian matrix, then given $\epsilon\in(0,1/2]$ and $\delta\in\left(0,1\right)$ we set,
\begin{equation}\label{eqn:eqrg}
r=O\left(\rho\epsilon^{-2}\log\left( \rho /\delta\right)\right).
\end{equation}

Our main theorem will be stated in terms of the randomized Hadamard Transform, but similar statements can be obtained for the other three transforms.
\begin{theorem}
\label{thm:genthm}
Let $\epsilon\in(0,1/2]$ be an accuracy paramater and let $\delta\in\left(0,1\right)$ be a failure probability. Let \math{\matR\in\mathbb{R}^{d\times r}} be the matrix of the randomized Hadamard Transform, with $r$ as in eqn.~(\ref{eqn:eqfht}). Let \math{\gamma^*} and \math{\tilde\gamma^*} be the margins obtained by solving the SVM problems using data matrices \math{\matX} and $\matX \matR$ respectively (eqns.~(\ref{eqn:svm1A}) and~(\ref{eqn:svm2})). Let $B$ be the radius of the minimum ball enclosing all points in the full-dimensional space (rows of $\matX$) and let $\tilde{B}$ be the radius of the ball enclosing all points in the dimensionally-reduced space (rows of $\matX \matR$). Then, with probability at least $1-2\delta$,
$$\frac{\tilde{B}^2}{\tilde{\gamma}^{*2}} \leq \frac{\left(1+\epsilon\right)}{\left(1-\epsilon\right)}\frac{B^2}{\gamma^{*2}}.$$ 	
\end{theorem}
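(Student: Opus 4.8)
The plan is to reduce everything to a single geometric fact: that the randomized Hadamard transform $\matR$, with $r$ chosen as in eqn.~(\ref{eqn:eqfht}), acts as an $\epsilon$-subspace embedding for the row space of $\matX$. Writing $\matX=\matU\matSig\matV^T$, this means that with probability at least $1-2\delta$ the $\rho\times\rho$ matrix $\matV^T\matR\matR^T\matV$ has all its eigenvalues in $[1-\epsilon,1+\epsilon]$, equivalently
$$(1-\epsilon)\TNormS{\w}\le\TNormS{\matR^T\w}\le(1+\epsilon)\TNormS{\w}\qquad\text{for every }\w\in\col(\matV).$$
I would establish this first, since it is the technical heart of the argument and the source of the $2\delta$ failure probability (the two events being the sign/Hadamard randomization that flattens the leverage scores of $\matU$, and the subsequent uniform subsampling, each controlled by a matrix concentration bound). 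Every vector of the form $\matX^T\matY\a$ lies in $\col(\matV)$, so the embedding applies verbatim to all weight vectors produced by feasible dual solutions.

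Given this, I would handle the margin. Let $F(\a)=\bm{1}^T\a-\frac12\TNormS{\matX^T\matY\a}$ and $\tilde F(\a)=\bm{1}^T\a-\frac12\TNormS{\matR^T\matX^T\matY\a}$ be the two dual objectives over the common feasible set, with optima $Z^*$ and $\tilde Z^*$. Using $\TNormS{\w^*}=\sum_i\alpha_i^*=\bm{1}^T\a^*$, the optimal value is $Z^*=\frac12\TNormS{\w^*}=\frac{1}{2\gamma^{*2}}$, and likewise $\tilde Z^*=\frac{1}{2\tilde\gamma^{*2}}$. The lower embedding bound gives $\tilde F(\a)\le\bm{1}^T\a-\frac{1-\epsilon}{2}\TNormS{\matX^T\matY\a}$ for all feasible $\a$; substituting $\b=(1-\epsilon)\a$ shows the right-hand side equals $\frac{1}{1-\epsilon}F(\b)$, where $\b$ now ranges over the box $[\,0,(1-\epsilon)C\,]^n$. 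Enlarging that box to $[0,C]^n$ only increases the maximum, so $\tilde Z^*\le\frac{1}{1-\epsilon}Z^*$, i.e. $\tilde\gamma^{*-2}\le(1-\epsilon)^{-1}\gamma^{*-2}$. This box-rescaling step is the one piece of real cleverness in the margin half, because it compares two optimizations with different optimizers without reasoning about $\tilde\a^*$ at all.

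For the ball radius, I would take the optimal enclosing center $\x^*$ in the full space. Since the minimum enclosing ball is determined by the points on its boundary, $\x^*$ lies in the convex hull of the rows of $\matX$, so each $\x_i-\x^*$ lies in $\col(\matV)$ and the embedding again applies. Using $\matR^T\x^*$ as a (suboptimal) center in the reduced space,
$$\tilde B^2\le\max_i\TNormS{\matR^T(\x_i-\x^*)}\le(1+\epsilon)\max_i\TNormS{\x_i-\x^*}=(1+\epsilon)B^2.$$
Multiplying the two bounds yields $\tilde B^2/\tilde\gamma^{*2}\le\frac{1+\epsilon}{1-\epsilon}\,B^2/\gamma^{*2}$, which is the claim.

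The step I expect to be the main obstacle is the subspace-embedding lemma itself: proving that the stated $r$ suffices for the randomized Hadamard construction requires the two-stage argument (incoherence after randomization, then a matrix Chernoff/Bernstein bound for uniform sampling), and tracking the logarithmic factors and the $2\delta$ union bound is where the real work lies. By contrast, once that fact is in hand the margin and ball estimates are short; the only subtlety on that side is justifying that both $\matX^T\matY\a$ and the recentred points $\x_i-\x^*$ genuinely live in the embedded subspace $\col(\matV)$.
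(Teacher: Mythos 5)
Your proposal is correct, and although it shares the paper's overall architecture (a spectral bound $\TNorm{\matV^T\matV-\matV^T\matR\matR^T\matV}\le\epsilon$ for the randomized Hadamard transform, then margin preservation, then radius preservation, then a union bound), both component arguments are genuinely different from the paper's. For the margin, the paper (Theorem~\ref{thm:main_technical}) notes that $\ta^*$ is feasible for the original dual, invokes optimality of $\a^*$, and then controls the error term $\frac12\ta^{*T}\matY\matU\matSig\matE\matSig\matU^T\matY\ta^*$ via submultiplicativity, which forces an extra step relating $\TNormS{\ta^{*T}\matY\matX}$ to $\TNormS{\ta^{*T}\matY\matX\matR}$; your substitution $\b=(1-\epsilon)\a$ compares the two optimal values directly, never reasons about $\ta^*$, and exploits the degree-one/degree-two homogeneity of the dual objective plus the fact that the feasible set is closed under shrinking toward the origin. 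This is shorter and in fact tighter: the paper's chain ends with $\tilde\gamma^{*2}\ge\left(1-\frac{\TNorm{\matE}}{1-\TNorm{\matE}}\right)\gamma^{*2}$, i.e.\ a factor $\frac{1-2\epsilon}{1-\epsilon}$, which is strictly below the advertised $1-\epsilon$ and requires an implicit rescaling of $\epsilon$, whereas your argument yields the factor $1-\epsilon$ exactly (and it extends to the regression dual, since the $\y^T\a-\varepsilon\TNorm{\a}_1$ terms are also homogeneous of degree one). For the radius, the paper (Theorem~\ref{thm:second_result}) augments $\matX$ with the center as an $(n+1)$-st row and applies Lemma~\ref{lem:spectral} a second time, to the right singular subspace $\matV_B$ of the augmented matrix (rank at most $\rho+1$); you instead reuse the single embedding of $\col(\matV)$ after asserting that the optimal center lies in $\col(\matV)$. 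That works and even improves the failure probability to $\delta$ (one invocation of the lemma rather than two), but your one-line justification ("the ball is determined by its boundary points") deserves to be made precise: either prove that the minimum-enclosing-ball center is a convex combination of the data points, or, more simply, note that orthogonally projecting any candidate center onto $\col(\matV)$ cannot increase any distance $\TNorm{\x_i-\x^*}$, so an optimal center may be taken in $\col(\matV)$. The paper's augmentation trick exists precisely to sidestep this point. Finally, your two-stage sketch for the embedding lemma (leverage-score flattening by $\matD\matH$, then matrix concentration for uniform column sampling) is exactly how the paper establishes Lemma~\ref{lem:spectral}, by citation; only your bookkeeping of where the $2\delta$ arises differs from the paper's (which charges $\delta$ to each of its two lemma invocations), and either accounting delivers the claimed probability $1-2\delta$.
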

Similar theorems can be stated for the other two constructions of $\matR$ by setting the value of $r$ as in eqns.~(\ref{eqn:eqcw}), ~(\ref{eqn:eqrs}) and ~(\ref{eqn:eqrg}). For the case of SVM regression, we can only show that the margin is preserved up to relative error with probability at least $1-\delta$, namely
$$\tilde\gamma^{*2}\ge \left(1-\epsilon \right)\gamma^{*2}.$$

\subsection{Prior work}
\label{subsec:prior}
The work most closely related to our results is that of \citeN{Krish07}, which improved upon \citeN{Balca01}. \citeN{Balca01} and \citeN{Balca02} used random sampling techniques for solving the SVM classification and $\varepsilon$-insensitive loss SVM regression problem respectively, but they were not able to implement their algorithms in practice. \citeN{Krish07} and \citeN{Krish09} showed that by using sub-problems based on Gaussian random projections, one can obtain a solution to the SVM classification and regression problem with a margin that is relative-error close to the optimal. Their sampling complexity (the parameter $r$ in our parlance) depends on \math{B^4}, and, most importantly, on \math{1/{\gamma^*}^2}. This bound is not directly comparable to our result, which only depends on the rank of the data manifold, and holds regardless of the margin of the original problem (which could be arbitrarily small). Our results dramatically improve the running time needed to apply the random projections; our running times are (theoretically) linear in the number of non-zero entries in $\X$, whereas \citeN{Krish07} necessitates $O(ndr)$ time to apply $\matR$ on $\X$.

\citeN{Blum05} showed relative error margin preservation for linearly separable data by angle preservation between points when using random orthogonal matrix, standard gaussian matrix and the random sign matrix. We show relative-error margin preservation for non-separable data and use methods that improve running time to compute random projections. \citeN{Shi12}  establish the conditions under which margins are preserved after random projection and show that error free margins are preserved for both binary and multi-class problems if these conditions are met. They discuss the theory of margin and angle preservation after random projections using Gaussian matrices. They show that margin preservation is closely related to acute angle preservation and inner product preservation. Smaller acute angle leads to better preservation of the angle and the inner product. When the angle is well preserved, the margin is well-preserved too. There are two main differences between their result and ours. They show margin preservation to within additive error, whereas we give margin preservation to within relative error. This is a big difference especially when the margin is small. Moreover, they analyze only the separable case. We analyze the general non-separable dual problem and give a result in terms of the norm of the weight vector. For the separable case, the norm of the weight vector directly relates to the margin. For the non-separable case, one has to analyze the actual quadratic program, and our result essentially claims that the solution in the transformed space will have comparably regularized weights as the solution in the original space.

\citeN{Shi09} used hash kernels which approximately preserved inner product to design a biased approximation of the kernel matrix. The hash kernels can be computed in the number of non-zero terms of a data matrix similar to the method of \citeN{Clark12}, \citeN{Meng13} and \citeN{NN13} that we employed. \citeN{Shi09} used random sign matrices to compute random projections which typically increase the number of non-zero terms of the data matrix. However, the method of \citeN{Clark12}, \citeN{Meng13} and \citeN{NN13} takes advantage of input sparsity. \citeN{Shi09} showed that their generalization bounds on the hash kernel and the original kernel differed by the inverse of the product of the margin and number of datapoints. For smaller margins, this difference will be high. Our generalization bounds are independent of the original margin and hold for arbitrarily small margins.

\citeN{Zhang12} developed algorithms to accurately recover the optimal solution to the original SVM optimization problem using a Gaussian random projection. They compute the dual solution provided that the data matrix has low rank. This is different from our work since we analyze the ratio of radius of the minimum enclosing ball to the margin using random projections and do not try to recover the solution.

Finally, it is worth noting that random projection techniques have been applied extensively in the compressed sensing literature, and our theorems have the same flavor to a number of results in that area. However, to the best of our knowledge, the compressed sensing literature has not investigated the 1-norm soft-margin SVM optimization problem. 
\vskip -0.05 in
\section{Random Projection Matrices}
\label{subsec:prel}
Random projections are extremely popular techniques in order to deal with the curse-of-dimensionality. Let the data matrix be $\matX \in \mathbb{R}^{n \times d}$ ($n$ data points in $\mathbb{R}^d$) and let $\matR \in \mathbb{R}^{d \times r}$ (with $r \ll d$) be a random projection matrix. Then, the projected data matrix is $\tilde{\matX}=\matX\matR\in\mathbb{R}^{n\times  r}$ (\math{n} points in \math{\mathbb{R}^r}). If $\matR$ is carefully chosen, then all pairwise Euclidean distances are preserved with high probability. Thus, the geometry of the set of points in preserved, and it is reasonable to hope that an optimization objective such as the one that appears in SVMs will be only mildly perturbed.

There are many possible constructions for the matrix $\matR$ that preserve pairwise distances. The most common one is a matrix $\matR$ whose entries are i.i.d. standard Gaussian random variables \cite{Dasgu03,Indyk98} --\textbf{RG} for short. \citeN{Achli03} argued that the random sign matrix -- \textbf{RS} for short -- e.g., a matrix whose entries are set to $+1$ or $-1$ with equal probability,
also works. \citeN{Li06} used the sparse random projection matrix whose entries were set to $+1$ or $-1$ with probability $1/2\sqrt{d}$ and $0$ with probability $(1- 1/\sqrt{d})$. These constructions take \math{O\left(ndr\right)} time to compute \math{\tilde\matX}. 

More recently, faster methods of constructing random projections have been developed, using, for example, the Fast Hadamard Transform \cite{Ailon06} -- \textbf{FHT} for short.  The Hadamard-Walsh matrix for any $d$ that is a power of two is defined as
$$ \matH_d = \left[
\begin{array}{cc}
  \matH_{d/2} & \matH_{d/2} \\
  \matH_{d/2} & -\matH_{d/2}
\end{array}\right] \in \mathbb{R}^{d \times d},$$ \\
\text{with} $\matH_1 = +1.$
The normalized Hadamard-Walsh matrix is $\sqrt{\frac{1}{d}}\matH_d$,
which we simply
denote by $\matH$.
We set:
\mld{\matRhad=\sqrt{\frac{d}{r}} \matD\matH\matS,\label{eqn:had}}
a rescaled product of three matrices.
$\matD \in \mathbb{R}^{d \times d}$ is a random diagonal matrix
with $\matD_{ii}$ equal to $\pm1$ with probability \math{\frac12}.
 $\matH \in \mathbb{R}^{d\times d}$ is
the normalized Hadamard transform matrix.
\math{\matS\in\mathbb{R}^{d\times r}} is a random \emph{sampling matrix} which randomly samples
columns of \math{\matD\matH}; specifically, each of the \math{r}
columns of
\math{\matS} is independent and selected uniformly at random
(with replacement)
from the columns of \math{\matI_d}, the identity matrix. This
construction assumes that \math{d} is a power of two. If not, we just pad
\math{\X} with columns of zeros (affecting run times by at most a factor of
two). The important property of this transform is that the projected
features \math{\tilde\matX=\matX\matR} can be computed efficiently in
$O\left(nd\log r\right)$ time (see Theorem 2.1 of \citeN{Ailon2008} for details).
An important property of \math{\matR} (that follows from prior work) is
that it preserves orthogonality. 

While the randomized Hadamard transform is a major improvement over prior work, it does not take advantage of any sparsity in the input matrix. To fix this, very recent work \cite{Clark12} shows that carefully constructed random projection matrices can be applied in input sparsity time by making use of generalized sparse embedding matrices. \citeN{Meng13}, \citeN{NN13} also use a similar construction which runs in input sparsity time. Here we describe the construction of \citeN{Clark12}. To understand their construction of $\matR$, assume that the rank of $\matX$ is $\rho$ and let $r = O\left(\rho \epsilon^{-4}\log \left(\rho/\delta\epsilon\right) \left(\rho+\log \left(1/\epsilon\delta\right)\right)\right).$ 
Then, let $a=\Theta\left(\epsilon^{-1} \log\left(\rho/\epsilon\delta\right)\right)$,  $v=\Theta\left(\epsilon^{-1}\right)$, and let $q= O\left(\rho \epsilon^{-2}\left(\rho+\log \left(1/\epsilon\delta\right)\right)\right)$ be an integer (by appropriately choosing the constants). 
The construction starts by letting $h:1\ldots d \rightarrow 1\ldots q$ be a random hash function; then, for $i=1\ldots q$, let $a_i = \abs{ h^{-1}(i)} $ and let $d = \sum_{i=1}^q a_i$. 
The construction proceeds by creating $q$ independent matrices $\matB_1\ldots \matB_q$, such that $\matB_i \in \mathbb{R}^{va \times a_i}$. Each $\matB_i$ is the concatenation (stacking the rows of matrices on top of each other) of the following matrices: $\sqrt{\frac{1}{a}}\mathbf{\Phi_{1}}\matD_1 \ldots \sqrt{\frac{1}{a}}\mathbf{\Phi}_{a} \matD_{a}$. 
The matrix $\mathbf{\Phi_{i}}\matD_{i} \in \mathbb{R}^{v \times a_i}$ is defined as follows: for each $m \in \{1 \ldots v \}$, $h(m) = g'$, where $g'$ is selected from $\{1 \ldots a_{i}\}$ uniformly at random. $\mathbf{\Phi_i} $ is a $v \times a_{i}$ binary matrix with $ \mathbf{\Phi_{h(m),m}} =1$ and all remaining entries set to zero. $\matD$ is an $ a_i \times a_i $ random diagonal matrix, with each diagonal entry independently set to be $+1$ or $-1$ with probability $1/2$. Finally, let $\matS$ be the block diagonal matrix constructed by stacking the $\matB_i$'s across its diagonal and let $\matP$ be a $d \times d$ permutation matrix; then,  $\matR=\left(\matS\matP\right)^T$. The running time is $O\left( nnz(\matX) + poly\left(n \epsilon^{-1} \right)\right) $. We will call the method of \citeN{Clark12} to construct a sparse embedding matrix \textbf{CW}.

\section{Geometry of SVM is preserved under Random Projection}
\label{sec:ourresults}

We now state and prove our main result, namely that solving the SVM optimization problem in the projected space results in comparable margin and data radius as in the original space. The following lemma will be crucial in our proof.
\begin{lemma}\label{lem:spectral}
Fix \math{\epsilon\in(0,\frac12]}, \math{\delta\in(0,1]}. Let \math{\matV\in\mathbb{R}^{d\times\rho}} be any matrix with orthonormal columns and let \math{\matR=\matRhad} as
in eqn.~(\ref{eqn:had}), with
\math{r=O(\rho\epsilon^{-2}\cdot \log(\rho d\delta^{-1})\cdot
\log(\rho\epsilon^{-2}\delta^{-1}\log(\rho d\delta^{-1})))}. Then,
with probability at least \math{1-\delta},
\mand{\norm{\matV^T\matV-\matV^T\matR\matR^T\matV}_2\le\epsilon.}
\end{lemma}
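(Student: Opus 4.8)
The plan is to recognize the claim as the standard spectral subspace-embedding guarantee for the subsampled randomized Hadamard transform, and to prove it by combining a ``flattening'' (coherence) bound with a matrix concentration inequality. Since $\matV$ has orthonormal columns, $\matV^T\matV=\matI_\rho$, so the quantity to control is $\TNorm{\matI_\rho-\matV^T\matR\matR^T\matV}$. Writing $\matR=\sqrt{d/r}\,\matD\matH\matS$ and using that $\matH$ and $\matD$ are symmetric, $\matR^T\matV=\sqrt{d/r}\,\matS^T\matH\matD\matV$. The key structural observation is that $\matH\matD$ is orthogonal, so with $\MatPsi=\matH\matD\matV$ we have $\MatPsi^T\MatPsi=\matV^T\matD\matH\matH\matD\matV=\matV^T\matV=\matI_\rho$; i.e. $\MatPsi$ again has orthonormal columns, and in particular $\tfrac1d\MatPsi^T\MatPsi=\tfrac1d\matI_\rho$. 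Because $\matS$ samples columns of $\matI_d$ uniformly at random with replacement, $\matS^T\MatPsi$ is a (rescaled) uniform sample of $r$ rows of $\MatPsi$, and the target reduces to
$$\TNorm{\matI_\rho-\tfrac{d}{r}\,\MatPsi^T\matS\matS^T\MatPsi}=\TNorm{\matI_\rho-\tfrac{d}{r}\sum_{t=1}^{r}\bm\psi_{i_t}\bm\psi_{i_t}^T},$$
where $\bm\psi_i^T=\e_i^T\MatPsi$ is the $i$-th row of $\MatPsi$ and $i_1,\dots,i_r$ are the i.i.d. uniform sample indices.

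The first half of the argument is to show that the randomization in $\matD$ together with $\matH$ equalizes the rows of $\MatPsi$, so that no row carries too much mass. Each entry $(\matH\matD\matV)_{ij}=\sum_k \matH_{ik}\matD_{kk}\matV_{kj}$ is a zero-mean Rademacher sum in the signs $\matD_{kk}$; since $\matH_{ik}^2=1/d$ and $\sum_k\matV_{kj}^2=1$, the variance proxy is exactly $1/d$, so a Hoeffding bound controls each entry at scale $1/\sqrt d$, and a union bound over all $d\rho$ entries yields, with probability at least $1-\delta/2$, a uniform coherence bound $\max_i\TNormS{\bm\psi_i}\le \tfrac{c\,\rho\log(\rho d/\delta)}{d}$ for an absolute constant $c$. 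This is exactly where the Hadamard/sign structure is indispensable: uniform sampling of rows would fail badly for a general $\MatPsi$, and it becomes accurate only after the coherence has been driven down.

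The second half is a matrix Chernoff argument conditioned on the coherence event. The summands $\matZ_t=\tfrac{d}{r}\bm\psi_{i_t}\bm\psi_{i_t}^T$ are i.i.d. and positive semidefinite, with $\Expect{\matZ_t}=\tfrac{d}{r}\cdot\tfrac1d\matI_\rho=\tfrac1r\matI_\rho$, so $\Expect{\sum_t\matZ_t}=\matI_\rho$, and each has spectral norm at most $\tfrac{d}{r}\max_i\TNormS{\bm\psi_i}\le \tfrac{c\,\rho\log(\rho d/\delta)}{r}$. A standard matrix Chernoff bound then gives $\TNorm{\matI_\rho-\sum_t\matZ_t}\le\epsilon$ with probability at least $1-\delta/2$ once $r$ exceeds a threshold of the form $\rho\epsilon^{-2}\log(\rho d/\delta)$ times the logarithm of the internal failure count; tracking this threshold against the per-sample norm bound (which itself carries the $\log(\rho d/\delta)$ factor) produces precisely the nested-logarithm value of $r$ stated in the lemma. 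A union bound over the two events bounds the total failure probability by $\delta$.

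I expect the main obstacle to be the coherence bound of the middle paragraph, both because it is where the specific structure of $\matRhad$ must be exploited and because getting its dependence tight is what forces the somewhat delicate, nested-logarithmic form of $r$; the matrix Chernoff step is then essentially a black-box application once the per-sample spectral bound is in hand.
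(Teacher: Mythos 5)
Your proposal is correct and takes essentially the same route as the paper: the paper's proof consists precisely of (i) a row-norm flattening bound for $\matH\matD\matV$, citing Lemma 3 of \cite{Drine11}, which gives $\TNormS{(\matH\matD\matV)_{(i)}}\le 2\rho\ln(40d\rho)/d$ for all rows with probability at least $1-\delta$, followed by (ii) a uniform row-sampling spectral concentration theorem (Theorem 4 in the appendix of \cite{Drine11}) applied with $\beta=(2\ln(40d\rho))^{-1}$ conditioned on that event. The only difference is that you sketch self-contained arguments for both steps (Hoeffding plus a union bound for the coherence step, matrix Chernoff for the sampling step) where the paper black-boxes them by citation; your route is sound and in fact yields a marginally cleaner $\log(\rho/\delta)$ factor in place of the nested logarithm that the cited theorem produces.
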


\begin{proof}
Consider the matrix $\matV^T \matR = \matV^T \matD\matH\matS$. Using Lemma 3 of~\cite{Drine11},
$$\TNormS{\left(\matH\matD\matV\right)_{(i)}} \leq \frac{2\rho \ln(40d\rho )}{d}
\Rightarrow \left(2\ln\left(40d\rho\right)\right)^{-1}\frac{\TNormS{\left(\matH\matD\matV\right)_{(i)}}}{\rho} \leq \frac{1}{d}$$

\noindent holds for all $i=1,\ldots,d$ with probability at least $1 -\delta$. In the above, the notation $\matA_{(i)}$ denotes the $i$-th row of $\matA$ as a row vector. Applying Theorem 4 with $\beta = \left(2\ln\left(40d\rho\right)\right)^{-1}$ (~\cite{Drine11}, Appendix) concludes the lemma.
\end{proof}

We now state two similar lemmas that cover two additional constructions for $\matR$.

\begin{lemma} \label{lem:spectral2}
Let  \math{\epsilon\in(0,\frac12]} and let \math{\matV\in\mathbb{R}^{d\times\rho}} be any matrix with orthonormal columns. 
Let $\matR \in \mathbb{R}^{d \times r}$ be a (rescaled) random sign matrix. If $r=O\left(\rho\epsilon^{-2} \log \rho \log d\right)$, then
with probability at least $1-1/n$, \mand{\norm{\matV^T\matV-\matV^T\matR\matR^T\matV}_2\le\epsilon.}
\end{lemma}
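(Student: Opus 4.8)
The plan is to reduce the claim to the statement that the rescaled sign matrix $\matR$ acts as a subspace embedding for the $\rho$-dimensional column space of $\matV$, and then to establish that embedding by a matrix-concentration argument whose structure mirrors the proof of Lemma~\ref{lem:spectral}. Since $\matV$ has orthonormal columns, $\matV^T\matV=\matI_\rho$, so the quantity to be bounded is $\TNorm{\matI_\rho-\matV^T\matR\matR^T\matV}$. Writing the (unscaled) columns of $\sqrt{r}\,\matR$ as independent sign vectors $\rb_1,\dots,\rb_r\in\{\pm1\}^d$ and setting $\z_j=\matV^T\rb_j\in\mathbb{R}^\rho$, we have $\matV^T\matR\matR^T\matV=\frac1r\sum_{j=1}^r \z_j\z_j^T$. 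Because $\Expect{\rb_j\rb_j^T}=\matI_d$, each summand satisfies $\Expect{\z_j\z_j^T}=\matV^T\matV=\matI_\rho$, so the sum is an unbiased estimator of $\matI_\rho$, and the task is to prove spectral concentration of this sum of independent rank-one matrices about its mean.

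First I would apply a matrix Chernoff/Bernstein bound to the independent PSD matrices $\matX_j=\frac1r\z_j\z_j^T$. Writing $M=\max_j\TNormS{\z_j}$, each summand obeys $\frac1r\TNormS{\z_j}\le M/r$, while the mean $\matI_\rho$ has $\lambda_{\min}=\lambda_{\max}=1$; the matrix Chernoff inequality then gives a failure probability of the form $\rho\,\expe^{-c\epsilon^2 r/M}$. Solving $\rho\,\expe^{-c\epsilon^2 r/M}\le 1/(2n)$ for $r$ yields $r=O(M\,\epsilon^{-2}\log(\rho n))$, and the dimension prefactor $\rho$ in the matrix Chernoff tail is exactly what produces the $\log\rho$ factor in the stated value of $r$.

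Second --- and this is the step I expect to be the main obstacle --- I must control $M$, i.e.\ bound $\max_j\TNormS{\z_j}=\max_j \rb_j^T\matV\matV^T\rb_j$ with high probability. This is the analogue of the row-norm (coherence) bound $\TNormS{(\matH\matD\matV)_{(i)}}\le 2\rho\ln(40d\rho)/d$ that drives the proof of Lemma~\ref{lem:spectral}. Each $\TNormS{\z_j}$ is a quadratic form in $d$ independent signs with mean $\Trace{\matV\matV^T}=\rho$; a concentration inequality for such quadratic forms (Hanson--Wright, or a direct moment/MGF estimate using $\TNorm{\matV\matV^T}\le1$ and $\FNormS{\matV\matV^T}=\rho$) gives $\TNormS{\z_j}=O(\rho\log d)$ simultaneously for all $j$ with probability at least $1-1/(2n)$, where the $\log d$ enters through the union bound and the heavy tail of the quadratic form. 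Substituting $M=O(\rho\log d)$ into the previous step yields $r=O(\rho\,\epsilon^{-2}\log\rho\log d)$, as claimed, after a final union bound combining the two failure events.

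As an alternative that sidesteps the quadratic-form tail bound, I could instead cover the unit sphere in $\col(\matV)$ by an $\alpha$-net $\mathcal{N}$ of size $(3/\alpha)^\rho$, apply the Johnson--Lindenstrauss norm-preservation property of sign matrices (\citeN{Achli03}) to each net point with a union bound, and extend from the net to the whole sphere via the standard estimate $\TNorm{\matI_\rho-\matV^T\matR\matR^T\matV}\le(1-2\alpha)^{-1}\max_{\x\in\mathcal{N}}\abs{\x^T(\matI_\rho-\matV^T\matR\matR^T\matV)\x}$. This route reproduces (indeed slightly improves) the bound on $r$, and it isolates the technical core as the sign-matrix JL tail estimate together with the net-to-sphere extension.
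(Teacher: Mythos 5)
Your proposal is correct in substance but takes a genuinely different route from the paper: the paper's entire proof is a one-line citation to Theorem 3.1(i) of \citeN{Zouzi11}, a low-rank matrix-valued Chernoff bound that directly yields the statement with the advertised $r=O(\rho\epsilon^{-2}\log\rho\log d)$ and the $1-1/n$ probability (note that $n$ does not otherwise appear in the lemma; it is inherited from the cited theorem). You instead reconstruct the concentration from scratch, and your two routes are essentially the machinery \emph{inside} that citation: the decomposition of $\matV^T\matR\matR^T\matV$ into a sum of independent rank-one matrices $\frac1r\z_j\z_j^T$ followed by matrix Chernoff is precisely the Magen--Zouzias strategy, while the $\alpha$-net argument with the sign-matrix JL tail bound of \citeN{Achli03} is the classical subspace-embedding proof. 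Two caveats on your first route. First, matrix Chernoff needs an almost-sure bound $\lambda_{\max}(X_j)\le M/r$, so conditioning on the event $\max_j\TNormS{\z_j}\le M$ is not quite enough as stated: you should pass to the truncated summands $\tilde X_j = X_j\mathbf{1}\left[\TNormS{\z_j}\le M\right]$, which remain independent and coincide with $X_j$ on the good event, and then bound the bias $\TNorm{\Expect{\tilde X_j}-\frac1r\matI_\rho}$ introduced by truncation (standard via Cauchy--Schwarz and the Hanson--Wright tail, but it must be said, since the truncated summands are no longer exactly unbiased). Second, your log-factor bookkeeping does not literally reproduce the stated $r$: with $M=O(\rho\log d)$ and failure probability $1/n$ the Chernoff route gives $r=O(\rho\epsilon^{-2}\log d\log(\rho n))$, not $O(\rho\epsilon^{-2}\log\rho\log d)$; the sharper Hanson--Wright bound $M=O(\rho+\log(rn))$, or better yet your net route with $r=O(\epsilon^{-2}(\rho+\log n))$, actually \emph{improves} on the lemma by removing the $\log\rho\log d$ factors entirely. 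In short: the paper's proof buys brevity and exact agreement with the stated parameters; your proof buys self-containedness, and in its net form a strictly stronger quantitative conclusion.
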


\begin{proof}
The proof of this result is a simple application of Theorem 3.1(i) of~\citeN{Zouzi11}.
\end{proof}

\begin{lemma} \label{lem:spectral3}
Let  $\epsilon\in(0,\frac12]$, $\delta\in\left(0,1\right)$, and let \math{\matV\in\mathbb{R}^{d\times\rho}} be any matrix with orthonormal columns. 
Let $\matR \in \mathbb{R}^{d \times r}$ be the CW random projection matrix (see Section~\ref{subsec:prel}) with $r =O\left(\rho\epsilon^{-4}\log\left(\rho /\delta\epsilon\right)\left(\rho+\log\left(1/\epsilon\delta\right)\right)\right)$. Then, with probability at least $1-\delta$, 
$$\norm{\matV^T\matV-\matV^T\matR\matR^T\matV}_2\le\epsilon.$$
\end{lemma}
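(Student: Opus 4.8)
The plan is to observe that, because $\matV$ has orthonormal columns, $\matV^T\matV=\matI$, so the quantity to be controlled is $\norm{\matI-(\matR^T\matV)^T(\matR^T\matV)}_2$. Bounding this by $\epsilon$ is equivalent to forcing every singular value of $\matR^T\matV$ into the interval $[\sqrt{1-\epsilon},\sqrt{1+\epsilon}]$, i.e. to requiring $(1-\epsilon)\TNormS{\matV\x}\le\TNormS{\matR^T\matV\x}\le(1+\epsilon)\TNormS{\matV\x}$ for every $\x\in\mathbb{R}^{\rho}$. This is precisely the statement that the sketch $\matR^T$ is an oblivious subspace embedding for the $\rho$-dimensional column space of $\matV$, which is exactly the guarantee the CW construction was designed to provide. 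So, in the same spirit as the proofs of Lemmas~\ref{lem:spectral} and~\ref{lem:spectral2}, the argument reduces to invoking the appropriate subspace-embedding theorem and reading it off in spectral-norm form.

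Concretely, first I would confirm that the parameters of the CW construction recalled in Section~\ref{subsec:prel} combine to give the claimed $r$: the sketch $\matR=(\matS\matP)^T$ has $r=vaq$ rows, and substituting $v=\Theta(\epsilon^{-1})$, $a=\Theta(\epsilon^{-1}\log(\rho/\epsilon\delta))$, and $q=O(\rho\epsilon^{-2}(\rho+\log(1/\epsilon\delta)))$ yields $r=O(\rho\epsilon^{-4}\log(\rho/\delta\epsilon)(\rho+\log(1/\epsilon\delta)))$, matching the hypothesis. Next I would invoke the subspace-embedding guarantee of~\citeN{Clark12}, with the refinements of~\citeN{Meng13} and~\citeN{NN13}, applied to the fixed rank-$\rho$ matrix $\matV$: for this choice of block sizes $a,v$ and block count $q$, the generalized sparse embedding matrix satisfies the two-sided norm inequality above with distortion $\epsilon$ and failure probability at most $\delta$. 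Rewriting that inequality as a bound on $\norm{\matI-\matV^T\matR\matR^T\matV}_2$ then gives the lemma.

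I expect the only real obstacle to be bookkeeping the parameter dependence rather than any new probabilistic estimate. The cited analyses are phrased for an arbitrary fixed matrix $\matA$ of rank $\rho$ and certify $(1\pm\epsilon)$ preservation of $\TNorm{\matA\x}$; one must check that specializing to $\matA=\matV$ (which has rank exactly $\rho$ and unit singular values) produces precisely the distortion $\epsilon$ and the failure probability $\delta$ stated here. In particular, the $\log(\rho/\epsilon\delta)$ factor in $a$ and the additive $\log(1/\epsilon\delta)$ term in $q$ are exactly what drive the failure probability down to $\delta$ in the generalized (rather than single-hash) construction, and one must track that these feed through the product $vaq$ without altering the leading $\rho^2$ dependence. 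Once the identification with the subspace-embedding property is made, no further computation is required.
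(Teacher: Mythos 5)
Your proposal is correct and takes essentially the same route as the paper: the paper's proof is a one-line citation of Theorem 1 of Meng and Mahoney (the subspace-embedding guarantee for the CW sketch), and your argument simply spells out the standard equivalence between that guarantee on the column space of $\matV$ and the spectral-norm bound $\norm{\matV^T\matV-\matV^T\matR\matR^T\matV}_2\le\epsilon$, together with the parameter bookkeeping $r=vaq$. Nothing in your reduction is missing or incorrect; it is a fleshed-out version of exactly what the paper leaves implicit.
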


\begin{proof}
The proof of this result follows from Theorem 1 of~\citeN{Meng13}.
\end{proof}

\begin{lemma} \label{lem:spectral4}
Let  $\epsilon\in(0,\frac12]$, $\delta\in\left(0,1\right)$, and let \math{\matV\in\mathbb{R}^{d\times\rho}} be any matrix with orthonormal columns. 
Let $\matR \in \mathbb{R}^{d \times r}$ be the Gaussian random projection matrix with $r=O\left(\rho\epsilon^{-2}\log\left( \rho /\delta\right)\right)$.
Then with probability at least $1-\delta$, 
$$\norm{\matV^T\matV-\matV^T\matR\matR^T\matV}_2\le\epsilon.$$
\end{lemma}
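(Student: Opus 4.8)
The plan is to reduce the claim to a statement about the extreme singular values of a small Gaussian matrix. Since \math{\matV} has orthonormal columns we have \math{\matV^T\matV=\matI_\rho}, so the quantity to bound is \math{\TNorm{\matI_\rho-\matV^T\matR\matR^T\matV}}. Write \math{\matR=\frac{1}{\sqrt r}\matG}, where \math{\matG\in\mathbb{R}^{d\times r}} has i.i.d. \math{N(0,1)} entries (this is the normalization for which \math{\Expect{\matR\matR^T}=\matI_d}). The first step is to observe that, because the columns of \math{\matV} are orthonormal, each column of \math{\matV^T\matG} is distributed as \math{N(0,\matV^T\matV)=N(0,\matI_\rho)} and the columns are independent; hence the \math{\rho\times r} matrix \math{\matG'=\matV^T\matG} again has i.i.d. \math{N(0,1)} entries. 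Consequently \math{\matV^T\matR\matR^T\matV=\frac{1}{r}\matG'\matG'^T}, and the target reduces to showing \math{\TNorm{\matI_\rho-\frac1r\matG'\matG'^T}\le\epsilon}.

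The second step rewrites this spectral-norm bound in terms of singular values. Let \math{\sigma_{\min}} and \math{\sigma_{\max}} denote the smallest and largest singular values of \math{\frac{1}{\sqrt r}\matG'}. Since \math{\matI_\rho-\frac1r\matG'\matG'^T} is symmetric, its spectral norm equals \math{\max\{\,\sigma_{\max}^2-1,\ 1-\sigma_{\min}^2\,\}}. Thus it suffices to prove that, with probability at least \math{1-\delta}, every singular value of \math{\frac{1}{\sqrt r}\matG'} lies in \math{[\sqrt{1-\epsilon},\sqrt{1+\epsilon}]}; equivalently, that for every unit vector \math{\y\in\mathbb{R}^\rho} the quadratic form \math{\frac1r\TNormS{\matG'^T\y}} is within \math{\epsilon} of \math{1}, since the supremum and infimum of this form over the unit sphere are exactly \math{\sigma_{\max}^2} and \math{\sigma_{\min}^2}.

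For the third step I would control this quadratic form pointwise and then uniformly. For a fixed unit vector \math{\y}, the vector \math{\matG'^T\y} is distributed as \math{N(0,\matI_r)}, so \math{\TNormS{\matG'^T\y}} is a \math{\chi^2} variable with \math{r} degrees of freedom, and a standard Laurent--Massart tail bound gives \math{\Probab{\,\abs{\frac1r\TNormS{\matG'^T\y}-1}>\epsilon\,}\le 2\,\expe^{-r\epsilon^2/8}}. To pass from this pointwise statement to the spectral-norm bound (a supremum over the whole sphere \math{S^{\rho-1}}), I would fix a \math{\frac14}-net \math{\cl N} of \math{S^{\rho-1}}, of size \math{\abs{\cl N}\le 9^\rho}, apply a union bound over \math{\cl N}, and then invoke the standard net-to-supremum comparison, whereby controlling the quadratic form on the net to within \math{O(\epsilon)} controls it uniformly. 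Forcing the resulting failure probability \math{9^\rho\cdot 2\,\expe^{-cr\epsilon^2}} below \math{\delta} holds as soon as \math{r=\Omega(\epsilon^{-2}(\rho+\log(1/\delta)))}, which is in particular implied by the stated choice \math{r=O(\rho\epsilon^{-2}\log(\rho/\delta))}.

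The main obstacle is precisely this uniform (spectral-norm) control: the per-direction \math{\chi^2} concentration is routine, but upgrading it to a bound valid simultaneously for all \math{\y\in S^{\rho-1}} requires the covering argument and the care needed to absorb the net resolution into the final constant. A cleaner alternative that avoids the net is to combine Gaussian Lipschitz concentration of the maps \math{\matG'\mapsto\sigma_{\max}} and \math{\matG'\mapsto\sigma_{\min}} with Gordon's expectation bounds \math{\Expect{\sigma_{\max}}\le\sqrt r+\sqrt\rho} and \math{\Expect{\sigma_{\min}}\ge\sqrt r-\sqrt\rho}; this route yields the same conclusion (indeed with the sharper \math{r=O(\epsilon^{-2}(\rho+\log(1/\delta)))}) at the cost of heavier machinery. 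Either approach, or a direct appeal to a known Gaussian subspace-embedding result, establishes the lemma.
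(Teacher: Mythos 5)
Your proof is correct, but it takes a genuinely different route from the paper. The paper's entire proof is a one-line appeal to a black-box result: it invokes Corollary 6 of the cited work of Zhang et al.\ (the same paper referenced elsewhere as \citeN{Zhang12}), which directly supplies the subspace-embedding guarantee for Gaussian projections. You instead give a self-contained, first-principles argument: the rotation-invariance step (reducing $\matV^T\matR$ to a fresh $\rho\times r$ i.i.d.\ Gaussian matrix $\matG'$, so that $\TNorm{\matV^T\matV-\matV^T\matR\matR^T\matV}=\TNorm{\matI_\rho-\frac1r\matG'\matG'^T}$) is exactly the right reduction and is the step that exploits orthonormality of $\matV$'s columns; from there, the $\chi^2$ concentration plus $\frac14$-net covering argument (or, as you note, Gaussian Lipschitz concentration combined with Gordon's bounds $\sqrt r\pm\sqrt\rho$ on the expected extreme singular values) is the standard machinery and is carried out correctly. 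What your route buys is transparency and a sharper parameter: you get $r=O\left(\epsilon^{-2}\left(\rho+\log(1/\delta)\right)\right)$, which improves on the lemma's stated $r=O\left(\rho\epsilon^{-2}\log(\rho/\delta)\right)$ in the typical regime, whereas the paper's citation-based proof inherits whatever form the external corollary provides and keeps the argument short. One small caution: your closing claim that the stated choice of $r$ ``implies'' your requirement is a big-O convention rather than a literal inequality (for $\rho=1$ and $\delta$ near $1$ the factor $\log(\rho/\delta)$ can be arbitrarily small), so it is cleaner to say that the lemma holds \emph{a fortiori} under the sharper bound your argument establishes.
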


\begin{proof}
The proof of this result follows from Corollary 6 of ~\citeN{Zhang12}.
\end{proof}

Lemma ~\ref{lem:spectral4} does not have the log factors as in Lemma ~\ref{lem:spectral}, but Gaussian projections are slower since they require full matrix-matrix multiplications.

\begin{theorem}\label{thm:main_technical}
Let $\epsilon$ be an accuracy parameter and let \math{\matR\in\mathbb{R}^{d\times r}} be a matrix satisfying
\math{\norm{\matV^T\matV - \matV^T \matR\matR^T \matV}_2\le\epsilon}. Let \math{\gamma^*} and \math{\tilde\gamma^*} be the margins obtained by
solving the SVM problems using data matrices \math{\matX} and \math{\matX \matR}
respectively (eqns.~(\ref{eqn:svm1A}) and~(\ref{eqn:svm2})). Then,
$$\tilde\gamma^{*2}\ge \left(1-\epsilon\right)\cdot \gamma^{*2}.$$
\end{theorem}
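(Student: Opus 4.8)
The plan is to recast the claim entirely in terms of the optimal objective values of the two quadratic programs and then relate those values by a scaling argument. Write $f(\a)=\bm 1^T\a-\frac12\a^T\matY\matX\matX^T\matY\a$ for the original objective and $\tilde f(\a)=\bm 1^T\a-\frac12\a^T\matY\matX\matR\matR^T\matX^T\matY\a$ for the projected one; both are maximized over the \emph{same} feasible region $\{\a:\bm 1^T\matY\a=0,\ \bm 0\le\a\le\bm C\}$. Using the SVM optimality identity $\TNormS{\w^*}=\bm 1^T\a^*$ recorded in Section~\ref{subsec:svm_basics}, the original optimum is $P:=f(\a^*)=\bm 1^T\a^*-\frac12\TNormS{\w^*}=\frac12\TNormS{\w^*}$, so $\gamma^{*2}=1/\TNormS{\w^*}=1/(2P)$. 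Applying the same identity to the projected SVM (data matrix $\matX\matR$, weight vector $\tilde\w^*=\matR^T\matX^T\matY\tilde\a^*$) gives $\tilde P:=\tilde f(\tilde\a^*)=\frac12\TNormS{\tilde\w^*}$ and $\tilde\gamma^{*2}=1/(2\tilde P)$. Hence the desired $\tilde\gamma^{*2}\ge(1-\epsilon)\gamma^{*2}$ is equivalent to the single scalar statement $\tilde P\le P/(1-\epsilon)$, i.e.\ an upper bound on the projected optimum in terms of the original optimum.

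Next I would convert the spectral hypothesis into a norm-distortion bound on the relevant weight vectors. Writing the SVD $\matX=\matU\matSig\matV^T$, every weight vector $\w(\a)=\matX^T\matY\a=\matV\matSig\matU^T\matY\a$ lies in $\col(\matV)$, so $\w(\a)=\matV\z$ with $\TNormS{\w(\a)}=\TNormS{\z}$ and $\TNormS{\matR^T\w(\a)}=\z^T\matV^T\matR\matR^T\matV\z$. The assumption $\norm{\matV^T\matV-\matV^T\matR\matR^T\matV}_2\le\epsilon$ then yields, for every feasible $\a$, the two-sided bound $(1-\epsilon)\TNormS{\w(\a)}\le\TNormS{\matR^T\w(\a)}\le(1+\epsilon)\TNormS{\w(\a)}$; only the lower bound, rewritten as $\TNormS{\w(\a)}\le\frac{1}{1-\epsilon}\TNormS{\matR^T\w(\a)}$, is needed below.

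The crux is the scaling argument producing $\tilde P\le P/(1-\epsilon)$. For $s\in[0,1]$ the point $s\tilde\a^*$ is still feasible in the original problem, since both the box $\bm 0\le\a\le\bm C$ and the homogeneous constraint $\bm 1^T\matY\a=0$ are preserved under shrinking; so by optimality of $\a^*$, $P\ge f(s\tilde\a^*)=s\,\bm 1^T\tilde\a^*-\frac{s^2}{2}\TNormS{\w(\tilde\a^*)}$. I would then substitute $\bm 1^T\tilde\a^*=\TNormS{\tilde\w^*}=2\tilde P$ (the projected optimality identity) and $\TNormS{\w(\tilde\a^*)}\le\frac{1}{1-\epsilon}\TNormS{\tilde\w^*}=\frac{2\tilde P}{1-\epsilon}$ from the previous step, giving $P\ge 2\tilde P\bigl(s-\frac{s^2}{2(1-\epsilon)}\bigr)$. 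Maximizing the right-hand side over $s$ selects $s=1-\epsilon$, which lies in $[0,1]$ precisely because $\epsilon\in(0,\tfrac12]$, and plugging it in collapses the bound to $P\ge(1-\epsilon)\tilde P$. Rearranging and using $\gamma^{*2}=1/(2P)$, $\tilde\gamma^{*2}=1/(2\tilde P)$ finishes the proof.

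I expect the main obstacle to be conceptual rather than computational: recognizing that one should \emph{not} compare the two objectives pointwise (they genuinely differ), but instead feed a suitably scaled copy of the projected optimizer $\tilde\a^*$ back into the original program and optimize the scale. Everything else — the identity $\TNormS{\w^*}=\bm 1^T\a^*$, the reduction of the spectral-norm hypothesis to $\col(\matV)$, and the one-variable maximization over $s$ — is routine once this comparison is set up, though one must verify that the chosen $s=1-\epsilon$ respects feasibility, which is exactly where $\epsilon\le\tfrac12$ is comfortably used.
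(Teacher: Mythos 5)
Your proof is correct, and while it shares the paper's overall skeleton---both arguments exploit the fact that the original and projected programs have identical feasible sets, both use the KKT identities $Z_{opt}=\frac{1}{2}\TNormS{\w^*}$ and $\tilde Z_{opt}=\frac{1}{2}\TNormS{\tw^*}$, and both reduce the spectral hypothesis to a relative-error comparison of $\TNormS{\matR^T\w(\a)}$ with $\TNormS{\w(\a)}$ for vectors in $\col(\matV)$---your finishing move is genuinely different and in fact sharper. The paper substitutes the projected optimizer $\ta^*$ itself (your $s=1$) into the original objective, decomposes $\matX\matX^T=\matU\matSig\left(\matV^T\matR\matR^T\matV+\matE\right)\matSig\matU^T$ with $\matE=\matV^T\matV-\matV^T\matR\matR^T\matV$, bounds the error term $\frac{1}{2}\ta^{*T}\matY\matU\matSig\matE\matSig\matU^T\matY\ta^*$ by sub-multiplicativity, and converts $\TNormS{\ta^{*T}\matY\matX}$ into $\TNormS{\ta^{*T}\matY\matX\matR}$; this yields $\tilde\gamma^{*2}\ge\left(1-\frac{\TNorm{\matE}}{1-\TNorm{\matE}}\right)\gamma^{*2}$, i.e.\ the factor $\frac{1-2\epsilon}{1-\epsilon}$ when $\TNorm{\matE}\le\epsilon$, which is slightly weaker than the advertised $1-\epsilon$ and matches the theorem statement only after rescaling $\epsilon$ by a constant factor. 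Your scaling maneuver---feeding $s\ta^*$ into the original program and optimizing over $s\in[0,1]$, which selects $s=1-\epsilon$---eliminates exactly this loss and delivers the constant $1-\epsilon$ as stated, so your route actually proves a (marginally) stronger result than the paper's own argument. One cosmetic remark: feasibility of $s=1-\epsilon$ requires only $\epsilon\le 1$, not $\epsilon\le\frac{1}{2}$, so your appeal to $\epsilon\in(0,\frac{1}{2}]$ at that point is more conservative than necessary.
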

\begin{proof} 
\noindent
Let
$\matE=\matV^T\matV - \matV^T \matR\matR^T \matV$, and
 $\a^* = \left[\alpha_1^*,\alpha_2^*,\ldots,\alpha_n^*\right]^T \in \mathbb{R}^n$ be the vector achieving the optimal solution for the problem of eqn.~(\ref{eqn:svm1A}) in Section~\ref{sec:intro}. Then,

\begin{eqnarray}
Z_{opt} &=& \sum_{i=1}^n \alpha_i^* - \frac{1}{2} \a^{*T} \matY \matX \matX^T \matY \a^* \nonumber \\
 &=& \sum_{i=1}^n \alpha_i^*  - \frac{1}{2} \a^{*T} \matY \matU \matSig \matV^T \matV \matSig \matU^T \matY \a^* \nonumber \\
\label{eqn: thm_eqn1} &=& \sum_{i=1}^n \alpha_i^* - \frac{1}{2} \a^{*T} \matY \matU \matSig \matV^T \matR \matR^T \matV \matSig \matU^T \matY \a^* \nonumber \\
& & - \frac{1}{2} \a^{*T} \matY \matU \matSig \matE \matSig \matU^T \matY \a^* .
\end{eqnarray}

Let $\ta^* = \left[\tilde{\alpha}_1^*,\tilde{\alpha}_2^*,\ldots,\tilde{\alpha}_n^*\right]^T \in \mathbb{R}^n$ be the vector achieving the optimal solution for the dimensionally-reduced SVM problem of eqn.~(\ref{eqn:svm2})
using \math{\tilde\matX=\matX\matR}. Using the SVD of $\matX$, we get

\begin{equation}
\label{eqn: thm_eqn2}
\tilde{Z}_{opt} = \sum_{i=1}^n \tilde{\alpha}_i^* - \frac{1}{2}\ta^{*T} \matY\matU \matSig \matV^T \matR \matR^T \matV \matSig \matU^T \matY \ta^*.
\end{equation}
Since the constraints on \math{\a^*,\ta^*} do not depend on the data
(see eqns.~(\ref{eqn:svm1A}) and~(\ref{eqn:svm2})), it is clear
that $\ta^*$ is a feasible solution for the problem of eqn.~(\ref{eqn:svm1A}). Thus, from the optimality of $\a^*$, and using eqn.~(\ref{eqn: thm_eqn2}), it follows that
\begin{eqnarray}
Z_{opt} &=& \sum_{i=1}^n \alpha_i^*  - \frac{1}{2} \a^{*T} \matY \matU \matSig \matV^T \matR \matR^T \matV \matSig \matU^T \matY \a^* \nonumber \\
& & - \frac{1}{2} \a^{*T} \matY \matU \matSig \matE \matSig \matU^T \matY \a^* \nonumber \\
&\geq& \sum_{i=1}^n \tilde{\alpha}_i^*  - \frac{1}{2} \ta^{*T} \matY \matU \matSig \matV^T \matR \matR^T \matV \matSig \matU^T \matY \ta^* \nonumber \\
& & - \frac{1}{2} \ta^{*T} \matY \matU \matSig \matE \matSig \matU^T \matY \ta^* \nonumber \\
\label{eqn: thm_eqn3} &=& \tilde{Z}_{opt} - \frac{1}{2} \ta^{*T} \matY \matU \matSig \matE \matSig \matU^T \matY \ta^*.
\end{eqnarray}
We now analyze the second term using standard
sub-multiplicativity properties and $\matV^T\matV = \matI$. Taking $\matQ = \ta^{*T}\matY\matU\matSig$
\begin{eqnarray}
\nonumber \frac{1}{2}\ta^{*T}\matY\matU\matSig\matE\matSig\matU^T\matY\ta^*
&\leq& \frac{1}{2}\TNorm{\matQ} \TNorm{E} \TNorm{\matQ^T} \nonumber \\
\nonumber &=& \frac{1}{2} \TNorm{\matE} \TNormS{\matQ} \\
\nonumber &=& \frac{1}{2} \TNorm{\matE} \TNormS{\ta^{*T}\matY \matU\matSig \matV^T} \\
\label{eqn: thm_eqn4} &=& \frac{1}{2} \TNorm{\matE} \TNormS{\ta^{*T}\matY\matX}.
\end{eqnarray}
Combining eqns.~\eqref{eqn: thm_eqn3} and~\eqref{eqn: thm_eqn4}, we get
\begin{eqnarray}
\label{eqn:thm_eqn5}
Z_{opt} &\geq & \tilde{Z}_{opt} - \frac{1}{2} \TNorm{\matE} \TNormS{\ta^{*T}\matY\matX}.
\end{eqnarray}
We now proceed to bound the second term in the right-hand side of the above equation. Towards that end, we bound the difference:
\begin{eqnarray*}
&& \lefteqn{\abs{\ta^{*T} \matY\matX\matR\matR^T\matX^T\matY\ta^* - \ta^{*T}\matY\matX\matX^T\matY\ta^*}} \nonumber \\
&=& \abs{\ta^{*T} \matY\matU\matSig\left(\matV^T\matR\matR^T\matV-\matV^T\matV\right)\matSig\matU^T\matY\ta^*} \nonumber \\
&=& \abs{\ta^{*T} \matY\matU\matSig\left(-\matE\right)\matSig\matU^T\matY\ta^*} \nonumber \\
&\leq& \TNorm{\matE}\TNormS{\ta^{*T} \matY\matU\matSig}\nonumber \\
&=& \TNorm{\matE}\TNormS{\ta^{*T} \matY\matU\matSig\matV^T} \nonumber \\
\label{eqn:thm_eqn6} &=& \TNorm{\matE}\TNormS{\ta^{*T} \matY\matX}.
\end{eqnarray*}
We can rewrite the above inequality as $\abs{\TNormS{\ta^{*T}\matY\matX\matR}- \TNormS{\ta^{*T}\matY\matX}} \leq \TNorm{\matE}\TNormS{\ta^{*T} \matY\matX}$; thus,
$$
\TNormS{\ta^{*T}\matY\matX} \leq \frac{1}{1-\TNorm{\matE}} \TNormS{\ta^{*T} \matY\matX \matR}.
$$
Combining with eqn.~\eqref{eqn:thm_eqn5}, we get
\begin{equation}\label{eqn:thm_eqn9}
Z_{opt} \geq  \tilde{Z}_{opt}-\frac{1}{2}\left(\frac{\TNorm{\matE}}{1-\TNorm{\matE}}\right)\TNormS{\ta^{*T} \matY\matX\matR}.
\end{equation}
Now recall from our discussion in Section~\ref{sec:intro} that $\w^{*T} = \a^{*T}\matY\matX$, $\tw^{*T} = \ta^{*T}\matY\matX\matR$, $\TNormS{\w^*} = \sum_{i=1}^n \alpha_i^*$, and $\TNormS{\tw^*} = \sum_{i=1}^n \tilde\alpha_i^*$. Then, the optimal solutions $Z_{opt}$ and $\tilde{Z}_{opt}$ can be expressed as follows:
\begin{eqnarray}
\label{eqn:pd1} Z_{opt} &=& \TNormS{\w^*} -\frac{1}{2}\TNormS{\w^*} = \frac{1}{2}\TNormS{\w^*},\\
\label{eqn:pd2} \tilde{Z}_{opt} &=& \TNormS{\tw^*} -\frac{1}{2}\TNormS{\tw^*} = \frac{1}{2}\TNormS{\tw^*}.
\end{eqnarray}
Combining eqns.~(\ref{eqn:thm_eqn9}),~(\ref{eqn:pd1}), and~(\ref{eqn:pd2}), we get
\begin{eqnarray}
\TNormS{\w^*} &\geq & \TNormS{\tw^*} - \left( \frac{\TNorm{\matE}}{1 - \TNorm{\matE}} \right)\TNormS{\tw^*} \nonumber \\
&=& \left(1-\frac{\TNorm{\matE}}{1 - \TNorm{\matE}} \right)\TNormS{\tw^*}.
\end{eqnarray}

Let $\gamma^*=\norm{\w^*}_2^{-1}$ be the geometric margin of the problem of eqn.~(\ref{eqn:svm1A}) and let $\tilde{\gamma}^*=\norm{\tw^*}_2^{-1}$ be the geometric margin of the problem of eqn.~(\ref{eqn:svm2}). Then, the above equation implies:
\begin{eqnarray}
\gamma^{*2} &\leq& \left(1-\frac{\TNorm{\matE}}{1 - \TNorm{\matE}} \right)^{-1}\tilde{\gamma}^{*2} \nonumber \\
\Rightarrow \tilde{\gamma}^{*2} &\geq&  \left(1-\frac{\TNorm{\matE}}{1 - \TNorm{\matE}} \right)\gamma^{*2}.
\end{eqnarray}
\remove{
In order to conclude the proof of Theorem~\ref{thm:main_result}, we prove a lemma that can be used to bound $\TNorm{E}$ (see Appendix for a short proof).
\begin{lemma} \label{lem:lem_R2}
Let $\epsilon \in (0,1/2]$ be an accuracy parameter and let $\matR \in \mathbb{R}^{d \times r}$ be the random projection matrix of Algorithm~\ref{alg:algfht}, with $r$ as in eqn.~(\ref{eqn:rval}). Then, with probability at least $.95-\delta$,
$$\TNorm{\matE} = \TNorm{\matV^T \matV - \matV^T \matR\matR^T\matV} \leq \epsilon. $$
\end{lemma}
Combining eqn.~(\ref{eqn:pd3}) with Lemma~\ref{lem:lem_R2}, setting $\delta = .05$, and using $\epsilon \leq 1/2$ concludes the proof.
}
\end{proof}

Our second theorem argues that the radius of the minimum ball enclosing all projected points (the rows of the matrix $\matX\matR$) is very close to the radius of the minimum ball enclosing all original points (the rows of the matrix $\matX$). We will prove this theorem for $\matR=\matRhad$ as in eqn.~(\ref{eqn:had}), but similar results can be proven for the other two constructions for $\matR$.

\begin{theorem}\label{thm:second_result}
Fix $\epsilon \in (0,\frac12]$, $\delta \in (0,1]$. Let $B$ be the radius of the minimum ball enclosing all points in the full-dimensional space (the rows of the matrix $\matX$), and let $\tilde{B}$ be the radius of the ball enclosing all points in the dimensionally reduced space (the rows of the matrix $\matX\matR$). Then, if \math{r=O(\rho\epsilon^{-2}\cdot \log(\rho d\delta^{-1})\cdot \log(\rho\epsilon^{-2}\delta^{-1}\log(\rho d\delta^{-1})))}, with probability at least $1-\delta$,
$$\tilde{B}^2 \leq (1+\epsilon) B^2.$$
\end{theorem}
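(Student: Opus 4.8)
The plan is to upper bound $\tilde B$ by exhibiting one specific feasible center for the minimum enclosing ball in the projected space and controlling the radius it induces. Let $\x^*$ be the center of the minimum enclosing ball of the original points, so that $B^2=\max_i\TNormS{\x_i-\x^*}$. A natural candidate center for the projected points $\{\matR^T\x_i\}$ is $\matR^T\x^*$; since $\tilde B^2$ is obtained by minimizing the enclosing radius over \emph{all} centers, any fixed choice only gives an upper bound, so
$$\tilde B^2\le\max_i\TNormS{\matR^T\x_i-\matR^T\x^*}=\max_i\TNormS{\matR^T(\x_i-\x^*)}.$$
It therefore suffices to show $\TNormS{\matR^T(\x_i-\x^*)}\le(1+\epsilon)\TNormS{\x_i-\x^*}$ for every $i$, and then take the maximum.

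The key structural observation is that each difference vector $\x_i-\x^*$ lies in the row space of $\matX$, i.e.\ in $\col(\matV)$. Every $\x_i$ lies in $\col(\matV)$ because the rows of $\matX=\matU\matSig\matV^T$ live in that subspace ($\x_i=\matV\matSig\matU_{(i)}^T$). For the center, I first argue that we may take $\x^*\in\col(\matV)$ without loss of generality: replacing $\x^*$ by its orthogonal projection $\x^*_\parallel$ onto $\col(\matV)$ cannot increase any distance, since for $\x_i\in\col(\matV)$ the Pythagorean identity $\TNormS{\x_i-\x^*}=\TNormS{\x_i-\x^*_\parallel}+\TNormS{\x^*_\parallel-\x^*}$ holds; hence $\x^*_\parallel$ encloses all points within radius $B$ and, by optimality of $B$, is itself an optimal center. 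Thus $\x_i-\x^*\in\col(\matV)$ for all $i$.

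With this in hand, write any such difference as $\x_i-\x^*=\matV\w$ for some $\w\in\mathbb{R}^\rho$. Using orthonormality of the columns of $\matV$ ($\matV^T\matV=\matI$), we have $\TNormS{\x_i-\x^*}=\w^T\matV^T\matV\w=\TNormS{\w}$, while $\TNormS{\matR^T(\x_i-\x^*)}=\w^T\matV^T\matR\matR^T\matV\w=\w^T(\matV^T\matV-\matE)\w=\TNormS{\w}-\w^T\matE\w$, where $\matE=\matV^T\matV-\matV^T\matR\matR^T\matV$. By Lemma~\ref{lem:spectral}, with the stated choice of $r$ we have $\TNorm{\matE}\le\epsilon$ with probability at least $1-\delta$, so $\abs{\w^T\matE\w}\le\TNorm{\matE}\TNormS{\w}\le\epsilon\TNormS{\w}$. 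Combining,
$$\TNormS{\matR^T(\x_i-\x^*)}\le(1+\epsilon)\TNormS{\w}=(1+\epsilon)\TNormS{\x_i-\x^*}\le(1+\epsilon)B^2,$$
and taking the maximum over $i$ yields $\tilde B^2\le(1+\epsilon)B^2$, which is the claim.

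The main obstacle is the (ultimately routine) justification that the optimal center may be taken inside $\col(\matV)$; the whole argument hinges on forcing the relevant vectors into the low-rank subspace so that Lemma~\ref{lem:spectral} applies. Once that reduction is made, the rest is a one-line computation, and the failure probability $\delta$ is inherited directly from Lemma~\ref{lem:spectral}.
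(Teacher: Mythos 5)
Your proof is correct, but it resolves the one delicate point of this theorem differently from the paper. Both arguments share the same skeleton: exhibit the projected original center $\matR^T\x^*$ as a feasible center, reduce everything to the quadratic form of $\matE=\matV^T\matV-\matV^T\matR\matR^T\matV$ on difference vectors, and invoke Lemma~\ref{lem:spectral}. The difficulty is that Lemma~\ref{lem:spectral} only controls vectors in a fixed low-dimensional subspace, and the center $\x^*$ need not a priori lie in the row space of $\matX$. You handle this by a WLOG reduction: project $\x^*$ onto $\col(\matV)$ and use the Pythagorean identity to show the projected center is also optimal, so every difference $\x_i-\x^*$ lies in $\col(\matV)$ and the lemma applies to $\matV$ itself, with rank exactly $\rho$. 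The paper instead sidesteps the issue by augmenting the data: it forms the $(n+1)\times d$ matrix $\matX_B$ whose last row is $\x_B^T$, takes its SVD $\matX_B=\matU_B\matSig_B\matV_B^T$ of rank $\rho_B\le\rho+1$, writes each difference as $\left(\e_i-\e_{n+1}\right)^T\matX_B$, and applies Lemma~\ref{lem:spectral} to $\matV_B$ (absorbing the $\rho+1$ into the $O(\cdot)$ for $r$). Your route is arguably cleaner: it needs no augmented matrix, no bookkeeping with $\rho_B$, and it matches the stated $r$ (which is phrased in terms of $\rho$) without any asymptotic slack. The paper's route is more mechanical and more robust in one respect: it never requires knowing anything about where the center sits, so it would work verbatim for \emph{any} designated reference point, not just one that can be argued into the row space. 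Either way the final step is the same one-line spectral-norm bound, and the failure probability $\delta$ comes directly from the lemma.
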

\begin{proof}
We consider the matrix $\matX_B \in \mathbb{R}^{(n+1)\times d}$ whose first $n$ rows are the rows of $\matX$ and whose last row is the vector $\x_B^T$; here $\x_B$ denotes the center of the minimum radius ball enclosing all $n$ points. Then, the SVD of $\matX_B$ is equal to $\matX_B = \matU_B \matSig_B \matV_B^T$, where $\matU_B \in \mathbb{R}^{(n+1) \times \rho_B}$, $\matSig_B \in \mathbb{R}^{\rho_B \times \rho_B}$, and $\matV \in \mathbb{R}^{d \times \rho_B}$. Here $\rho_B$ is the rank of the matrix $\matX_B$ and clearly $\rho_B \leq \rho + 1$. (Recall that $\rho$ is the rank of the matrix $\matX$.) Let $B$ be the radius of the minimal radius ball enclosing all $n$ points in the original space. Then, for any $i=1,\ldots,n$,
\begin{equation}\label{eqn:pd4}
B^2 \geq \TNormS{\x_i-\x_B} = \TNormS{\left(\e_i-\e_{n+1}\right)^T\matX_B}.
\end{equation}
Now consider the matrix $\matX_B\matR$ and notice that
\begin{eqnarray*}
 &&\abs{\TNormS{\left(\e_i-\e_{n+1}\right)^T\matX_B} - \TNormS{\left(\e_i-\e_{n+1}\right)^T\matX_B\matR}}\\
 &=& \abs{\left(\e_i-\e_{n+1}\right)^T\left(\matX_B\matX_B^T-\matX_B\matR\matR^T\matX_B^T\right)\left(\e_i-\e_{n+1}\right)}\\
%
%
&=& \abs{\left(\e_i-\e_{n+1}\right)^T\matU_B\matSig_B\matE_B\matSig_B\matU_B^T\left(\e_i-\e_{n+1}\right)}\\
&\leq& \TNorm{\matE_B}\TNormS{\left(\e_i-\e_{n+1}\right)^T\matU_B\matSig_B}\\
&=& \TNorm{\matE_B}\TNormS{\left(\e_i-\e_{n+1}\right)^T\matU_B\matSig_B\matV_B^T}\\
&=& \TNorm{\matE_B}\TNormS{\left(\e_i-\e_{n+1}\right)^T\matX_B}.
\end{eqnarray*}
In the above, we let $\matE_B \in \mathbb{R}^{\rho_B \times \rho_B}$ be the matrix that satisfies $\matV_B^T\matV_B = \matV_B^T \matR\matR^T \matV_B + \matE_B$, and we also used $\matV_B^T\matV_B = \matI$. Now consider the ball whose center is the $(n+1)$-st row of the matrix $\matX_B\matR$ (essentially, the projection of the center of the minimal radius enclosing ball for the original points). Let $\tilde{i} = \arg \max_{i=1\ldots n} \TNormS{\left(\e_i-\e_{n+1}\right)^T\matX_B\matR}$; then, using the above bound and eqn.~(\ref{eqn:pd4}), we get
\begin{eqnarray*}
\TNormS{\left(\e_{\tilde{i}}-\e_{n+1}\right)^T\matX_B\matR} &\leq& \left(1+\TNorm{\matE_B}\right)\TNormS{\left(\e_{\tilde{i}}-\e_{n+1}\right)^T\matX_B} \\
 &\leq& \left(1+\TNorm{\matE_B}\right)B^2.
\end{eqnarray*}
Thus, there exists a ball centered at $\e_{n+1}^T\matX_B\matR$ (the projected center of the minimal radius ball in the original space) with radius at most $\sqrt{1+\TNorm{\matE_B}}B$ that encloses all the projected points. Recall that $\tilde{B}$ is defined as the radius of the minimal radius ball that encloses all points in projected subspace; clearly,
$$\tilde{B}^2 \leq \left(1+\TNorm{\matE_B}\right)B^2.$$
We can now use Lemma~\ref{lem:spectral} on $\matV_B$ to conclude that (using $\rho_B \leq \rho + 1$) $\TNorm{\matE_B} \leq \epsilon$ 
\end{proof}
Similar theorems can be proven for the two other constructions of $\matR$ by using appropriate values for $r$. We are now ready to conclude the proof of Theorem~\ref{thm:genthm}.
\begin{proof}\textit{(of Theorem~\ref{thm:genthm})}
The proof of Theorem~\ref{thm:genthm} follows by combining Theorem~\ref{thm:main_technical}, Lemma~\ref{lem:spectral}, and Theorem~\ref{thm:second_result}. The failure probability is at most $2\delta$, by a simple application of the union bound.
\end{proof}

Finally, we state the margin preservation theorem for SVM regression, which is analogous to Theorem~\ref{thm:main_technical}. This theorem holds for all four choices of the random projection matrix $\matR \in \mathbb{R}^{d \times r}$ and is identical to the proof of Theorem~\ref{thm:main_technical}.
\begin{theorem}\label{thm:svmr}
Let $\epsilon$ be an accuracy parameter and let \math{\matR\in\mathbb{R}^{d\times r}} be a matrix satisfying
\math{\norm{\matV^T\matV - \matV^T \matR\matR^T \matV}_2\le\epsilon}. Let \math{\gamma^*} and \math{\tilde\gamma^*} be the margins obtained by
solving the SVM regression problems using data matrices \math{\matX} and \math{\matX \matR}
respectively (eqns.~(\ref{eqn:svm4A}) and~(\ref{eqn:svm5})). Then,
$$\tilde\gamma^{*2}\ge \left(1-\epsilon\right)\cdot \gamma^{*2}.$$
\end{theorem}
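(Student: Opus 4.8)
The plan is to follow the proof of Theorem~\ref{thm:main_technical} essentially verbatim, exploiting the fact that the regression program~(\ref{eqn:svm4A}) differs from the classification program~(\ref{eqn:svm1A}) only in ways that are invisible to the argument. Concretely, the quadratic form is now $\a^T\matX\matX^T\a$ rather than $\a^T\matY\matX\matX^T\matY\a$, so the SVD $\matX=\matU\matSig\matV^T$ enters directly (no flanking $\matY$); the linear part $\y^T\a-\varepsilon\bm{1}^T\a$ does not involve $\matX$; and the constraints $\bm 1^T\a=0$ together with $\bm{-C}\le\a\le\bm{C}$ are again data-independent. First I would set $\matE=\matV^T\matV-\matV^T\matR\matR^T\matV$, so that $\TNorm{\matE}\le\epsilon$ by hypothesis, and split $\matX\matX^T=\matU\matSig\matV^T\matR\matR^T\matV\matSig\matU^T+\matU\matSig\matE\matSig\matU^T$.

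Next I would perform the optimality/feasibility sandwich. Because the feasible region of~(\ref{eqn:svm4A}) does not depend on the data, the optimizer $\ta^*$ of the projected problem~(\ref{eqn:svm5}) is feasible for~(\ref{eqn:svm4A}). Evaluating the original objective at $\ta^*$ reproduces exactly the same data-independent linear contribution $\y^T\ta^*-\varepsilon\bm1^T\ta^*$ that appears in $\tilde{Z}_{opt}$, so those terms cancel, and after substituting the SVD one is left with
\begin{equation*}
Z_{opt}\ \ge\ \tilde{Z}_{opt}-\frac{1}{2}\ta^{*T}\matU\matSig\matE\matSig\matU^T\ta^*,
\end{equation*}
the exact analog of eqn.~(\ref{eqn: thm_eqn3}) with $\matY\matU$ replaced by $\matU$.

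I would then bound the error term. Writing $\matQ=\ta^{*T}\matU\matSig$ and using submultiplicativity together with $\matV^T\matV=\matI$ gives $\frac{1}{2}\ta^{*T}\matU\matSig\matE\matSig\matU^T\ta^*\le\frac{1}{2}\TNorm{\matE}\TNormS{\ta^{*T}\matX}$, exactly as in eqn.~(\ref{eqn: thm_eqn4}). The same perturbation inequality applied to $\matX\matR\matR^T\matX^T$ versus $\matX\matX^T$ yields $\TNormS{\ta^{*T}\matX}\le(1-\TNorm{\matE})^{-1}\TNormS{\ta^{*T}\matX\matR}$, so that
\begin{equation*}
Z_{opt}\ \ge\ \tilde{Z}_{opt}-\frac{1}{2}\left(\frac{\TNorm{\matE}}{1-\TNorm{\matE}}\right)\TNormS{\ta^{*T}\matX\matR}.
\end{equation*}

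Finally I would close using the regression primal--dual relations, the analog of eqns.~(\ref{eqn:pd1})--(\ref{eqn:pd2}): with $\w^{*T}=\a^{*T}\matX$ and $\tw^{*T}=\ta^{*T}\matX\matR$ one has $Z_{opt}=\frac{1}{2}\TNormS{\w^*}$ and $\tilde{Z}_{opt}=\frac{1}{2}\TNormS{\tw^*}$. Substituting and rearranging gives $\TNormS{\w^*}\ge\left(1-\frac{\TNorm{\matE}}{1-\TNorm{\matE}}\right)\TNormS{\tw^*}$, and since $\gamma^*=\TNorm{\w^*}^{-1}$ and $\tilde\gamma^*=\TNorm{\tw^*}^{-1}$, the bound $\TNorm{\matE}\le\epsilon\le\frac{1}{2}$ delivers $\tilde\gamma^{*2}\ge(1-\epsilon)\gamma^{*2}$. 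The one step that is not a literal transcription---and hence the main obstacle---is verifying that $Z_{opt}=\frac{1}{2}\TNormS{\w^*}$ still holds for regression: in the classification proof this rested on the identity $\TNormS{\w^*}=\sum_i\alpha_i^*$, whereas here the linear part couples $\y$ and $\varepsilon$. I would therefore re-derive the corresponding relation, equating the optimal value of~(\ref{eqn:svm4A}) with $\frac{1}{2}\TNormS{\w^*}$, directly from the KKT/strong-duality conditions of the $\varepsilon$-insensitive program before invoking it; everything else is an index-free copy of the classification argument, which is why the statement holds for all four constructions of $\matR$.
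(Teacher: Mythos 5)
Your proposal coincides with the paper's own treatment: the paper's entire ``proof'' of Theorem~\ref{thm:svmr} is the single assertion that it is identical to the proof of Theorem~\ref{thm:main_technical}, which is exactly the verbatim adaptation you carry out (drop $\matY$, observe that the linear term $\y^T\a-\varepsilon\bm{1}^T\a$ is data-independent so it survives the feasibility sandwich unchanged, and reuse the $\matE$-perturbation bounds). The one step you flag as needing care---re-deriving the regression analogue of $Z_{opt}=\frac{1}{2}\TNormS{\w^*}$ from the KKT/strong-duality conditions of the $\varepsilon$-insensitive program---is something the paper never does explicitly (it relies only on its Section~1.1 remark that the regression margin ``is defined in the same way'' as for classification), so your proposal is, if anything, more careful than the paper's own argument.
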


\section{Experiments}\label{sec:exp}
In our experimental evaluations, we implemented random projections using four different methods: RG, RS, FHT, and CW (see Section~\ref{subsec:prel} for definitions) in MATLAB version 7.13.0.564 (R2011b).  We ran the algorithms using the same values of $r$  (the dimension of the projected feature space) for all algorithms, but we varied $r$ across different datasets. We used LIBLINEAR \cite{Fan08} and LIBSVM \cite{Chang11} as our linear SVM solver with default settings. In all cases, we ran our experiments on the original full data (referred to as ``full'' in the results), as well as on the projected data. For large-scale datasets, we use LIBLINEAR which is a faster SVM solver than LIBSVM, while for medium-scale datasets we use LIBSVM.
We partitioned the data randomly for ten-fold cross-validation in order to estimate out-of-sample error. We repeated this partitioning ten times to get ten ten-fold cross-validation experiments. In the case where the dataset is already available in the form of a training and test-set, we do not perform ten-fold cross validation and use the given training and test set instead. In order to estimate the effect of the randomness in the construction of the random projection matrices, we repeated our cross-validation experiments ten times using ten different random projection matrices for all datasets. For classification experiments, we report in-sample error ($\epsilon_{in}$), out-of-sample error ($\epsilon_{out}$), the time to compute random projections ($t_{rp}$), the total time needed to both compute random projections \textit{and} run SVMs on the lower-dimensional problem ($t_{run}$), and the margin ($\gamma$). For regression experiments, we report the margin, the combined running-time of random projections and SVM, mean-squared error $(mse)$ and the squared correlation-coefficient $(\beta)$ of $\epsilon_{in}$. All results are averaged over the ten cross-validation experiments and the ten choices of random projection matrices. For each of the aforementioned quantities, we report both its mean value $\mu$ and its standard deviation $\sigma$. 

\subsection{Experiments on SVM Classification}
We describe experimental evaluations on three real-world datasets, namely a collection of document-term matrices (the TechTC-300 dataset \cite{David04}), a subset of the Reuters Corpus dataset (RCV1 Dataset \cite{DL04b}) and a population genetics dataset (the joint Human Genome Diversity Panel or HGDP \cite{Li08} and the HapMap Phase 3 data \cite{Pasch10}) and also on three synthetic datasets. The synthetic datasets, a subset of the RCV1 dataset and the TechTC-300 dataset correspond to binary classification tasks while the joint HapMap-HGDP dataset and a subset of the RCV1 dataset correspond to multi-class classification tasks; our algorithms perform well in multi-class classification as well. For the multi-class experiments of Section~\ref{subsubsec:snp}, we do not report a margin. We use LIBLINEAR as our SVM solver for Hapmap-HGDP \footnote{In \citeN{PBMD13}, the experiments on Hapmap-HGDP dataset were done using LIBSVM's one-against-one multi-class classification method. LIBLINEAR does not have the one-against-one method implemented in the package. So we use the Crammer and Singer method  \cite{CS00}.} and the RCV1 datasets, while for the remaining datasets we use LIBSVM as our solver. For multi-class experiments, we use the method of Crammer and Singer \cite{CS00} implemented in LIBLINEAR.
\begin{table}[!ht]
\tbl{$\epsilon_{out}$ and $\gamma$ of Synthetic Data \label{tab:tabsyn}}{
\begin{tabular}{|c||c|c|c|c|c|}
\hline
	\multicolumn{2}{|c|}{$\epsilon_{out}$}&\multicolumn{3}{c|}{Projected Dimension \math{r}}&\\
	\cline{3-6}
	\multicolumn{2}{|c|}{}{}&256 &512 & 1024 & \textbf{full} \\
	\hline
	\multirow{4}{*}{D1} &CW $(\mu)$ &24.08  &19.45   &16.66  &$\mathbf{15.10}$ \\
				&$(\sigma)$  &4.52    &4.15    &3.52  &$\mathbf{2.60}$ \\ 	
	\cline{2-6}
		                 &RS  $(\mu)$   &24.1.0  &19.46   &16.36  &$\mathbf{15.10}$ \\
				&$(\sigma)$   &4.45    &3.79    &3.22 &$\mathbf{2.60}$ \\		
	\cline{2-6}				
		                 &FHT  $(\mu)$  &23.52   &19.59   &16.67 &$\mathbf{15.10}$ \\
			       &  $(\sigma)$  &4.21    &4.05    &3.37   &$\mathbf{2.60}$ \\
	\cline{2-6}				
		                 &RG  $(\mu)$  & 24.34   &19.73   &16.69  &$\mathbf{15.10}$ \\
			       &  $(\sigma)$  &4.44    &3.86    &3.28   &$\mathbf{2.60}$ \\
	\hline\hline
	\multirow{4}{*}{D2} &CW $(\mu)$  &25.94   &21.07   &17.33   &$\mathbf{15.44}$ \\
				&$(\sigma)$  &4.13   &4.16    &3.45  & $\mathbf{2.54}$ \\ 	
	\cline{2-6}
		                 &RS $(\mu)$   &25.80   &20.80   &17.47   &$\mathbf{15.44}$ \\
				&$(\sigma)$   &4.40    &3.93    &3.42    &$\mathbf{2.54}$ \\		
	\cline{2-6}				
		                 &FHT  $(\mu)$   &25.33   &21.23   &17.58   &$\mathbf{15.44}$\\
			       &$(\sigma)$     &3.69    &4.24    &3.53    &$\mathbf{2.54}$\\
	\cline{2-6}				
		                 &RG  $(\mu)$   &25.43   &20.54   &17.25   &$\mathbf{15.44}$\\
			       &$(\sigma)$     &4.03    &3.65    &3.38    &$\mathbf{2.54}$\\
	\hline\hline
	\multirow{4}{*}{D3} &CW $(\mu)$ &27.62 &22.97 &18.93 &$\mathbf{15.83}$ \\
				&$(\sigma)$ &3.46 &3.22 &3.32 &$\mathbf{2.00}$ \\ 	
	\cline{2-6}
		                 &RS $(\mu)$  &28.15 &23.00 &18.72 &$\mathbf{15.83}$ \\
				& $(\sigma)$  &3.02 &3.48 &2.78 &$\mathbf{2.00}$ \\		
	\cline{2-6}				
		                 &FHT $(\mu)$  &27.92 &23.41 &18.73 &$\mathbf{15.83}$ \\
			       & $(\sigma)$     &3.46   &3.60  &3.02 &$\mathbf{2.00}$ \\
	\cline{2-6}				
		                 &RG $(\mu)$  &27.71  &22.85  &18.96  &$\mathbf{15.83}$ \\
			       & $(\sigma)$     &3.38    &3.29   &3.33  &$\mathbf{2.00}$ \\
	\hline	
\end{tabular}
\quad
\begin{tabular}{|c||c|c|c|c|c|}
\hline
	\multicolumn{2}{|c|}{$\gamma$}&\multicolumn{3}{c|}{Projected Dimension \math{r}}&\\
	\cline{3-6}
	\multicolumn{2}{|c|}{}{}&256 &512 & 1024 & \textbf{full} \\
	\hline
	\multirow{4}{*}{D1} &CW $(\mu)$ &5.72    &6.67    &7.16  &$\mathbf{7.74}$ \\
				&$(\sigma)$  &0.58    &0.58    &0.59    &$\mathbf{0.59}$ \\ 	
	\cline{2-6}
		                 &RS  $(\mu)$    &5.73    &6.66    &7.18    &$\mathbf{7.74}$ \\
				&$(\sigma)$   &0.57    &0.55    &0.55   &$\mathbf{0.59}$ \\		
	\cline{2-6}				
		                 &FHT  $(\mu)$  &5.76    &6.64    &7.15    &$\mathbf{7.74}$ \\
			       &  $(\sigma)$  &0.56    &0.58    &0.56  &$\mathbf{0.59}$ \\
	\cline{2-6}				
		                 &RG  $(\mu)$  &5.67    &6.60    &7.13    &$\mathbf{7.74}$ \\
			       &  $(\sigma)$  &0.57    &0.51    &0.54   &$\mathbf{0.59}$ \\
	\hline\hline
	\multirow{4}{*}{D2} &CW $(\mu)$    &6.62    &8.09    &8.88  &$\mathbf{9.78}$ \\
				&$(\sigma)$  &0.64    &0.62    &0.59  &$\mathbf{0.66}$ \\ 	
	\cline{2-6}
		                 &RS $(\mu)$   &6.65    &8.10    &8.88   &$\mathbf{9.78}$ \\
				&$(\sigma)$    &0.64    &0.60    &0.63  &$\mathbf{0.66}$ \\		
	\cline{2-6}				
		                 &FHT  $(\mu)$  &6.66    &8.06    &8.84   &$\mathbf{9.78}$\\
			       &$(\sigma)$     &0.63    &0.65    &0.63   &$\mathbf{0.66}$\\
	\cline{2-6}				
		                 &RG  $(\mu)$  &6.66    &8.13    &8.90   &$\mathbf{9.78}$\\
			       &$(\sigma)$     &0.65    &0.60    &0.63   &$\mathbf{0.66}$\\
	\hline\hline
	\multirow{4}{*}{D3} &CW $(\mu)$ &7.69    &9.84   &11.07 &$\mathbf{12.46}$ \\
				&$(\sigma)$  &0.67    &0.60    &0.71  &$\mathbf{0.69}$ \\ 	
	\cline{2-6}
		                 &RS $(\mu)$ &7.61  &9.85   &11.05  &$\mathbf{12.46}$ \\
				& $(\sigma)$  &0.59    &0.6212    &0.62   &$\mathbf{0.69}$ \\		
	\cline{2-6}				
		                 &FHT $(\mu)$  &7.63    &9.83   &11.11 &$\mathbf{12.46}$ \\
			       & $(\sigma)$    &0.67    &0.64    &0.64 &$\mathbf{0.69}$ \\
	\cline{2-6}				
		                 &RG $(\mu)$  &7.69    &9.85   &11.04  &$\mathbf{12.46}$ \\
			       & $(\sigma)$    &0.67    &0.61    &0.7  &$\mathbf{0.69}$ \\

	\hline	
\end{tabular}
}
\begin{tabnote}
\Note{Synthetic data:}{\math{\epsilon_{out}} decreases and \math{\gamma} increases as a function of $r$ in all three families of matrices, using any of the four random projection methods. $\mu$ and $\sigma$ indicate the mean and the standard deviation of $\epsilon_{out}$ over ten matrices in each family $D1$, $D2$, and $D3$, ten ten-fold cross-validation experiments, and ten choices of random projection matrices for the four methods that we investigated (a total of 1,000 experiments for each family of matrices).}
\end{tabnote}
\end{table}	

\subsubsection{Synthetic datasets}\label{subsubsec:synth}
The synthetic datasets are separable by construction. More specifically, we first constructed a weight vector $\w \in \mathbb{R}^{d}$, whose entries were selected in i.i.d. trials from a Gaussian distribution ${\mathcal N}(\mu, \sigma)$ of mean $\mu$ and standard-deviation $\sigma$. We experimented with the following three distributions: ${\mathcal N}(0,1)$, ${\mathcal N}(1,1.5)$, and ${\mathcal N}(2,2)$. Then, we normalized $\w$ to create $\hw = \w / \TNorm{\w}$. Let $ \matX_{ij} = {\mathcal N}(0,1)$; then, we set $\x_i$ to be equal to the $i$-th row of $\matX$, while $\y_i = sign \left( \hw^T \x_i \right) $.
We generated families of matrices of different dimensions. More specifically, family $D1$ contained matrices in $\mathbb{R}^{200\times 5,000}$; family $D2$ contained matrices in $\mathbb{R}^{250\times 10,000}$; and family $D3$ contained matrices in $\mathbb{R}^{300\times 20,000}$. We generated ten datasets for each of the families $D1$, $D2$, and $D3$, and we report average results over the ten datasets. We set $r$ to $256$, $512$, and $1024$ and set $C$ to 1,000 in LIBSVM for all the experiments. Tables ~\ref{tab:tabsyn}  shows  $\epsilon_{out}$ and $\gamma$ for the three datasets $D1$, $D2$, and $D3$. $\epsilon_{in}$ is zero for all three data families. As expected, $\epsilon_{out}$ and $\gamma$ improve as $r$ grows for all four random projection methods. Also, the time needed to compute random projections is very small compared to the time needed to run SVMs on the projected data. Figure~\ref{fig:syn2} shows the combined running time of random projections and SVMs, which is nearly the same for all four random projection methods. It is obvious that this combined running time is much smaller that the time needed to run SVMs on the full dataset (without any dimensionality reduction). For instance, for $r=1024$, $t_{run}$ for $D1$, $D2$, and $D3$  is (respectively) 6, 9, and 25 times smaller than $t_{run}$ on the full-data. 
\begin{figure}[!tp]

\begin{center}
\includegraphics[height = 60mm,width=\columnwidth,clip]{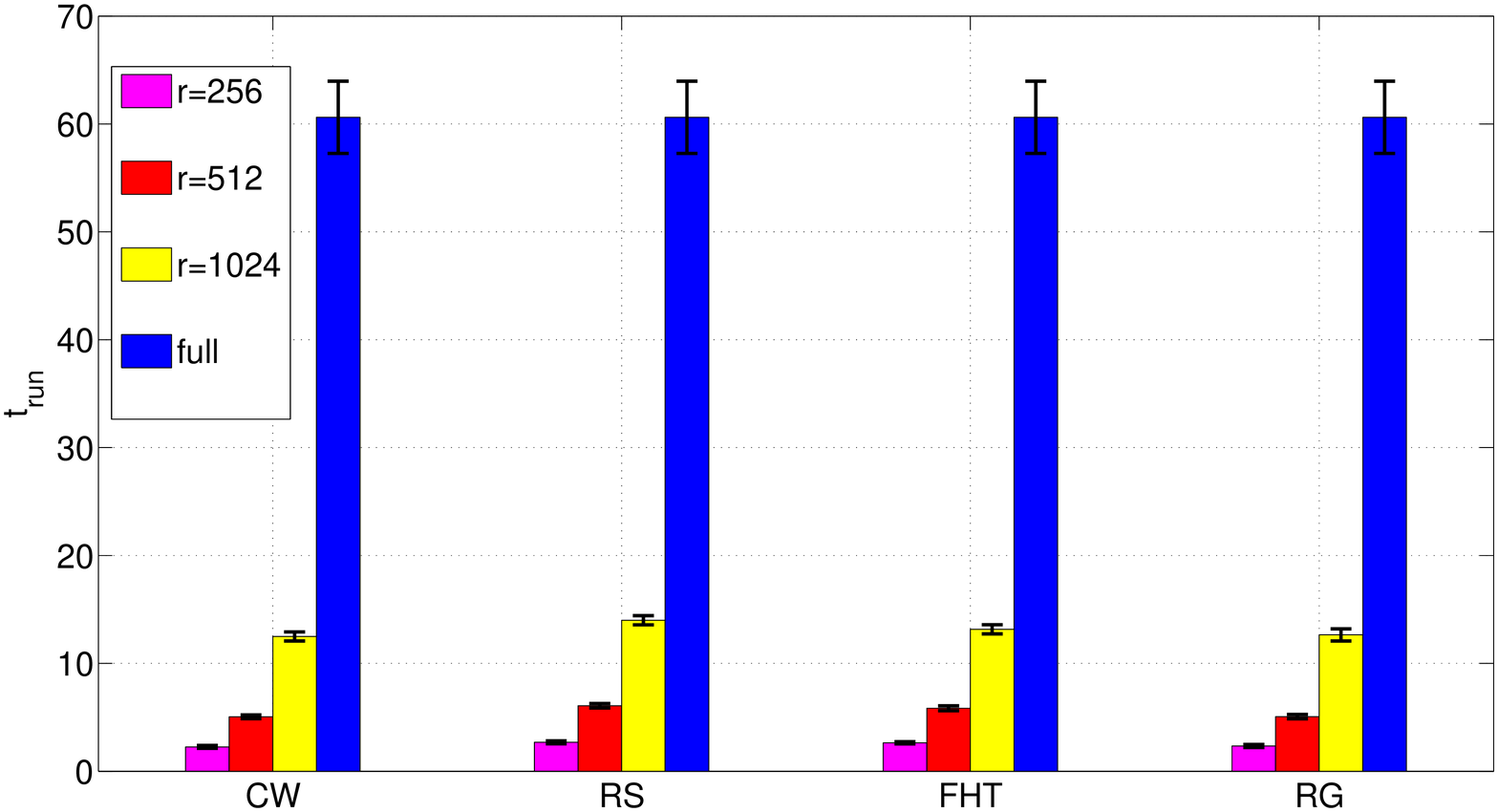}
D1
\includegraphics[height = 60mm,width=\columnwidth,clip]{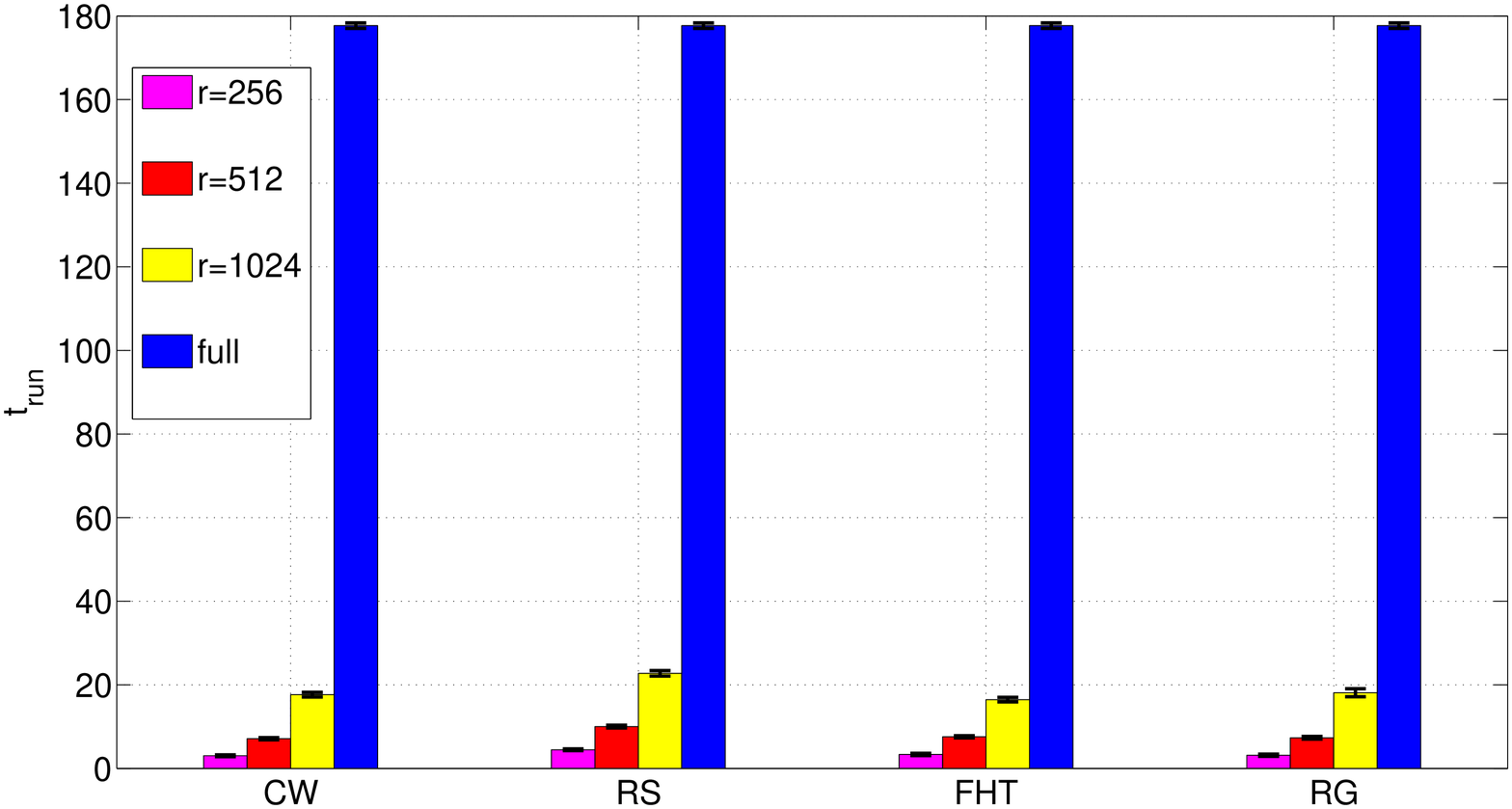}
D2
\includegraphics[height = 60mm,width=\columnwidth,clip]{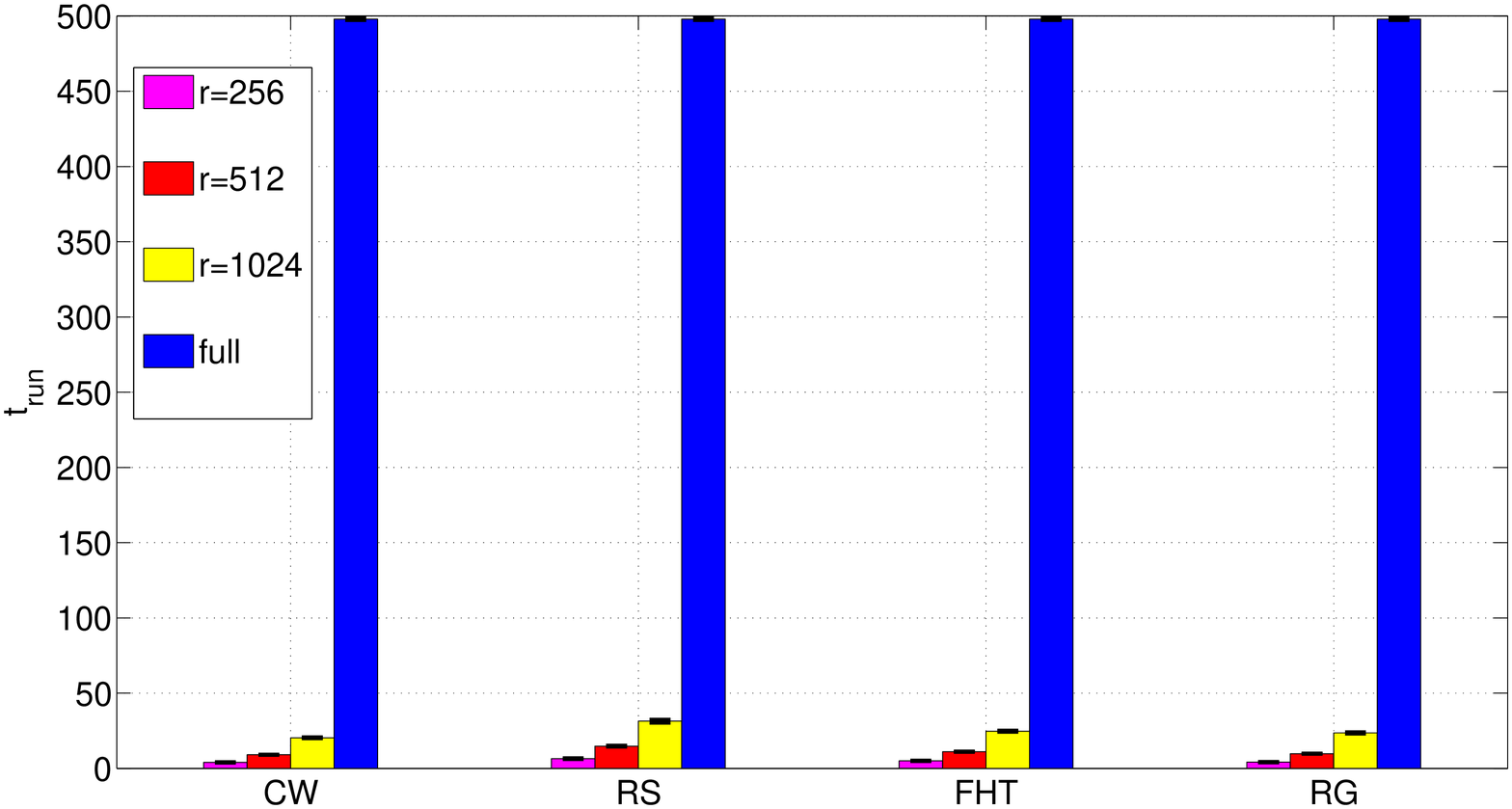}
D3
\caption{\small Total (average) running times, in seconds, of random projections \textit{and} SVMs  on the lower-dimensional data for each of the three families of synthetic data. Vertical bars indicate the, relatively small, standard deviation (see the caption of Table ~\ref{tab:tabsyn}).}
\label{fig:syn2}
\end{center}
\end{figure}

\subsubsection{The TechTC-300 dataset}\label{subsubsec:techtc}

For our first real dataset, we use the TechTC-300 data, consisting of a family
of 295 document-term data matrices. The TechTC-300 dataset comes from the Open Directory Project (ODP), which is a large, comprehensive directory of the web, maintained by volunteer editors. Each matrix in the TechTC-300 dataset contains a pair of categories from the ODP. Each category corresponds to a label, and thus the resulting classification task is binary. The documents that are collected from the union of all the subcategories within each category are represented in the bag-of-words model, with the words constituting the features of the data \cite{David04}. Each data matrix consists of 150-280 documents (the rows of the data matrix $\matX$), and each document is described with respect to 10,000-40,000 words (features, columns of the matrix $\matX$). Thus, TechTC-300 provides a diverse collection of data sets for a
systematic study of the performance of the SVM on the projected
versus full data.
\begin{table}[ht]
\tbl{TechTC-300 Dataset \label{tab:tabtechtc}}{

\begin{tabular}{|c||c|c|c|c|c|}
\hline
	\multicolumn{2}{|c|}{}&\multicolumn{3}{c|}{Projected Dimension \math{r}}&\\
	\cline{3-6}
	\multicolumn{2}{|c|}{}{}&128 &256 &512 & \textbf{full} \\
	\hline
	\multirow{4}{*}{$\mathbf{\epsilon_{out}} $} &CW$(\mu)$ &24.63 &22.84 &21.26  &$\mathbf{17.35}$  \\
							      &$(\sigma)$ &10.57 &10.37 &10.17  &$\mathbf{9.45}$\\
	\cline{2-6}
		                 &RS$(\mu)$ &24.58 &22.90 &21.38  &$\mathbf{17.35}$  \\
				&$(\sigma)$  &10.57 &10.39 &10.23  &$\mathbf{9.45}$\\
	\cline{2-6}				
		                 &FHT $(\mu)$ &24.63 &22.93 &21.35  &$\mathbf{17.35}$ \\
			& $(\sigma)$  &10.66 &10.39 &10.2  &$\mathbf{9.45}$\\
	\cline{2-6}				
		                 &RG $(\mu)$ &24.59  &22.96  &21.36  &$\mathbf{17.35}$ \\
			& $(\sigma)$  &10.54  &10.5  &10.18  &$\mathbf{9.45}$\\
	\hline\hline
	\multirow{4}{*}{$\mathbf{\gamma}$ } &CW $(\mu)$  &1.66 &1.88 &1.99 &$\mathbf{2.09}$  \\
							&  $(\sigma)$   &3.68 &3.79 &3.92 & $\mathbf{4.00}$ \\
	\cline{2-6}
		                 &RS $(\mu)$   	&1.66	&1.88	&1.99	&$\mathbf{2.09}$ \\
				& $(\sigma )$       &3.65 & 3.80 &3.91 &$\mathbf{4.00}$ \\		
	\cline{2-6}				
		                 &FHT $(\mu)$  	&1.66	&1.88	&1.98	&$\mathbf{2.09}$ \\
				&  $(\sigma)$      &3.65  &3.81 &3.88 &$\mathbf{4.00}$ \\
	\cline{2-6}				
		                 &RG $(\mu)$  	&1.66	&1.88	&1.99	&$\mathbf{2.09}$ \\
				&  $(\sigma)$      &3.70   &3.83   &3.91   &$\mathbf{4.00}$ \\
	\hline\hline
	\multirow{4}{*}{$\mathbf{t_{rp} }$} &CW $(\mu)$  &0.0046 &0.0059 &0.0075 &$\mathbf{--}$  \\
							& $(\sigma)$    &0.0019 &0.0026 &0.0033 &$\mathbf{--}$ \\
	\cline{2-6}
		                 &RS $(\mu)$   	&0.0429  &0.0855	&0.1719	&$\mathbf{--}$ \\
				& $(\sigma)$        &0.0178 &0.0356  &0.072  &$\mathbf{--}$ \\		
	\cline{2-6}				
		                 &FHT $(\mu)$ 	&0.0443  &0.0882	&0.1764	&$\mathbf{--}$ \\
				&  $(\sigma)$       &0.0206  &0.0413 &0.0825  &$\mathbf{--}$ \\
	\cline{2-6}				
		                 &RG $(\mu)$ 	&0.039  &0.078	&0.1567	&$\mathbf{--}$ \\
				&  $(\sigma)$  &0.0159  &0.0318  &0.0642  &$\mathbf{--}$ \\
	\hline\hline
	\multirow{4}{*}{$\mathbf{t_{run} }$} &CW $(\mu)$  &1.23 &2.22 &4.63 &$\mathbf{4.85}$  \\
							& $(\sigma)$    &0.87 &0.93 &1.93 &$\mathbf{2.12}$ \\
	\cline{2-6}
		                 &RS $(\mu)$  	&0.99	&1.53	&3.02	&$\mathbf{4.85}$ \\
				&  $(\sigma )$      &0.97 &0.59  &1.12  &$\mathbf{2.12}$ \\		
	\cline{2-6}				
		                 &FHT $(\mu)$  	&0.95	&1.46	&2.83	&$\mathbf{4.85}$ \\
				&  $(\sigma)$       &0.96  &0.55 &1.02  &$\mathbf{2.12}$ \\
	\cline{2-6}				
		                 &RG $(\mu)$  	&0.82	&1.23	&2.48	&$\mathbf{4.85}$ \\
				&  $(\sigma)$       &0.83  &0.45  &0.84  &$\mathbf{2.12}$ \\

	\hline
\end{tabular}}
\begin{tabnote}
\Note{TechTC300:}{Results on the TechTC300 dataset, averaged over 295 data matrices using four different random projection methods. The table shows how \math{\epsilon_{out}}, \math{\gamma}, \math{t_{rp}} (in seconds), and \math{t_{run}} (in seconds) depend on $r$. $\mu$ and $\sigma$ indicate the mean and the standard deviation of each quantity over 295 matrices, ten ten-fold cross-validation experiments, and ten choices of random projection matrices for the four methods that we investigated.}
\end{tabnote}\
\end{table}	
We set the parameter $C$ to 500 in LIBSVM for all 295 document-term matrices and set $r$ to $128$, $256$, and $512$. We use a lower value of C than for the other data sets for computational reasons: larger \math{C} is less efficient. We note that our classification accuracy is slightly worse (on the full data) than the accuracy presented in Section 4.4 of \citeN{David04}, because we did not fine-tune the SVM parameters as they did, since that is not the focus of this study. For every dataset and every value of \math{r} we tried, the in-sample error on the projected data matched the in-sample error on the full data. We thus focus on \math{\epsilon_{out}}, the margin \math{\gamma}, the time needed to compute random projections \math{t_{rp}},
and the total running time \math{t_{run}}. We report our results averaged over 295 data matrices. Table~\ref{tab:tabtechtc} shows the behavior of these parameters for different choices of \math{r}.
As expected, \math{\epsilon_{out}} and the margin \math{\gamma} improve as \math{r} increases, and they are nearly identical for all four random projection methods. The time needed to compute random projections is smallest for CW, followed by RG, RS and FHT. As a matter of fact, \math{t_{rp}} for CW is ten to 20 times faster than RG, RS and FHT for different values of $r$. This is predicted by the theory in \citeN{Clark12}, since CW is optimized to take advantage of input sparsity. However, this advantage is lost when SVMs are applied on the dimensionally-reduced data. Indeed, the combined running time $t_{run}$ is fastest for RG, followed by FHT, RS and CW.  In all cases, the total running time is smaller than the SVM running time on full dataset. For example, in the case of RG or FHT, setting $r=512$ achieves a running time \math{t_{run}} which is about twice as fast as running SVMs on the full dataset; $\epsilon_{out}$ increases by less than $4\%$.
\begin{figure}[!htb]
\begin{center}
\includegraphics[height = 55mm,width=\columnwidth,clip]{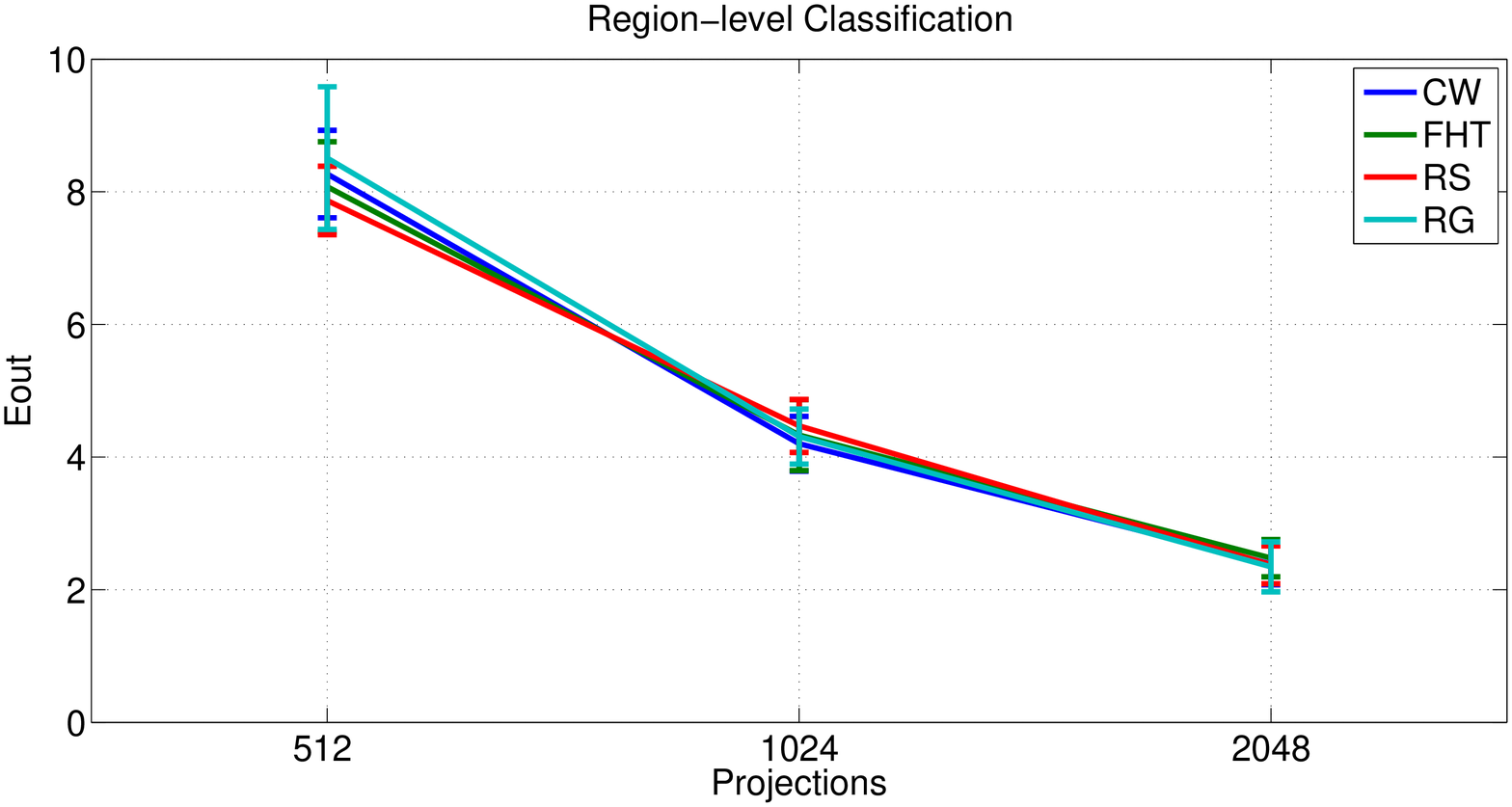}
\includegraphics[height = 55mm,width=\columnwidth,clip]{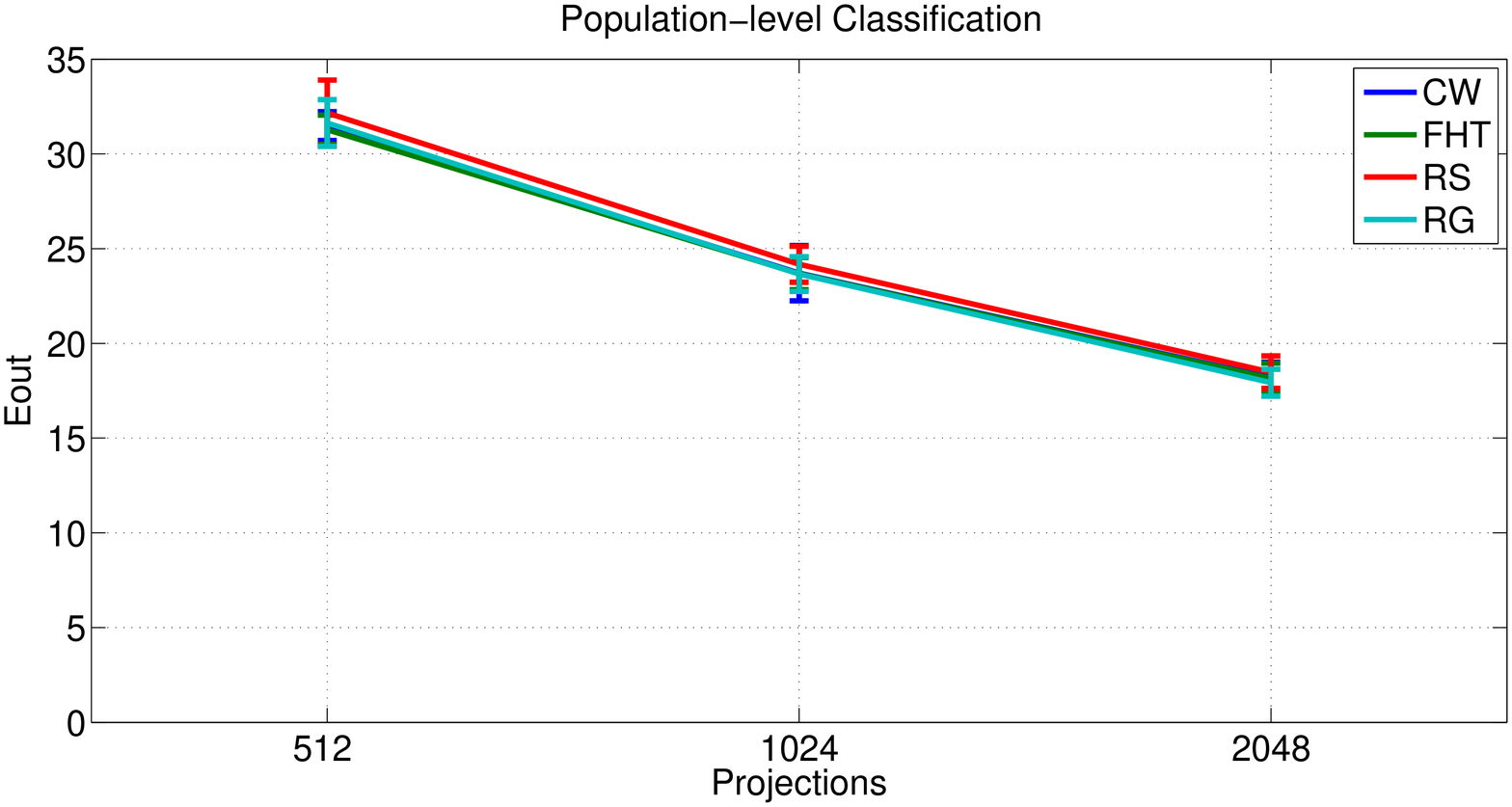}
\caption{\small{$\epsilon_{out}$ as a function of $r$ in the Hapmap-HGDP dataset for four different random projection methods and two different classification tasks. Vertical bars indicate the standard-deviation over the ten ten-fold cross-validation experiments and the ten choices of the random projection matrices for each of the four methods.}}
\label{fig:snp_eout}
\end{center}
\end{figure}

\subsubsection{The HapMap-HGDP dataset}\label{subsubsec:snp}
Predicting ancestry of individuals using a set of genetic markers is a well-studied classification problem. We use a population genetics dataset from the Human Genome Diversity Panel (HGDP) and the HapMap Phase 3 dataset (see \citeN{Pasch10} for details), in order to classify individuals into broad geographic regions, as well as into
 (finer-scale) populations. We study a total of 2,250 individuals from approximately 50 populations and five broad geographic regions (Asia, Africa, Europe, the Americas, and Oceania).
The features in this dataset correspond to $492,516$ Single Nucleotide Polymorphisms (SNPs), which are well-known biallelic loci of genetic variation across the human genome. Each entry in the resulting $2,250\times 492,516$ matrix is set to $+1$ (homozygotic in one allele), $-1$ (homozygotic in the other allele), or $0$ (heterozygotic), depending on the genotype of the respective SNP for a particular sample. Missing entries were filled in with $-1$, $+1$, or $0$, with probability 1/3. Each sample has a known population and region of origin, which constitute its label.
%
\begin{figure}[!tp]

\begin{center}
\centerline{\includegraphics[height = 55mm,width=\columnwidth,clip]{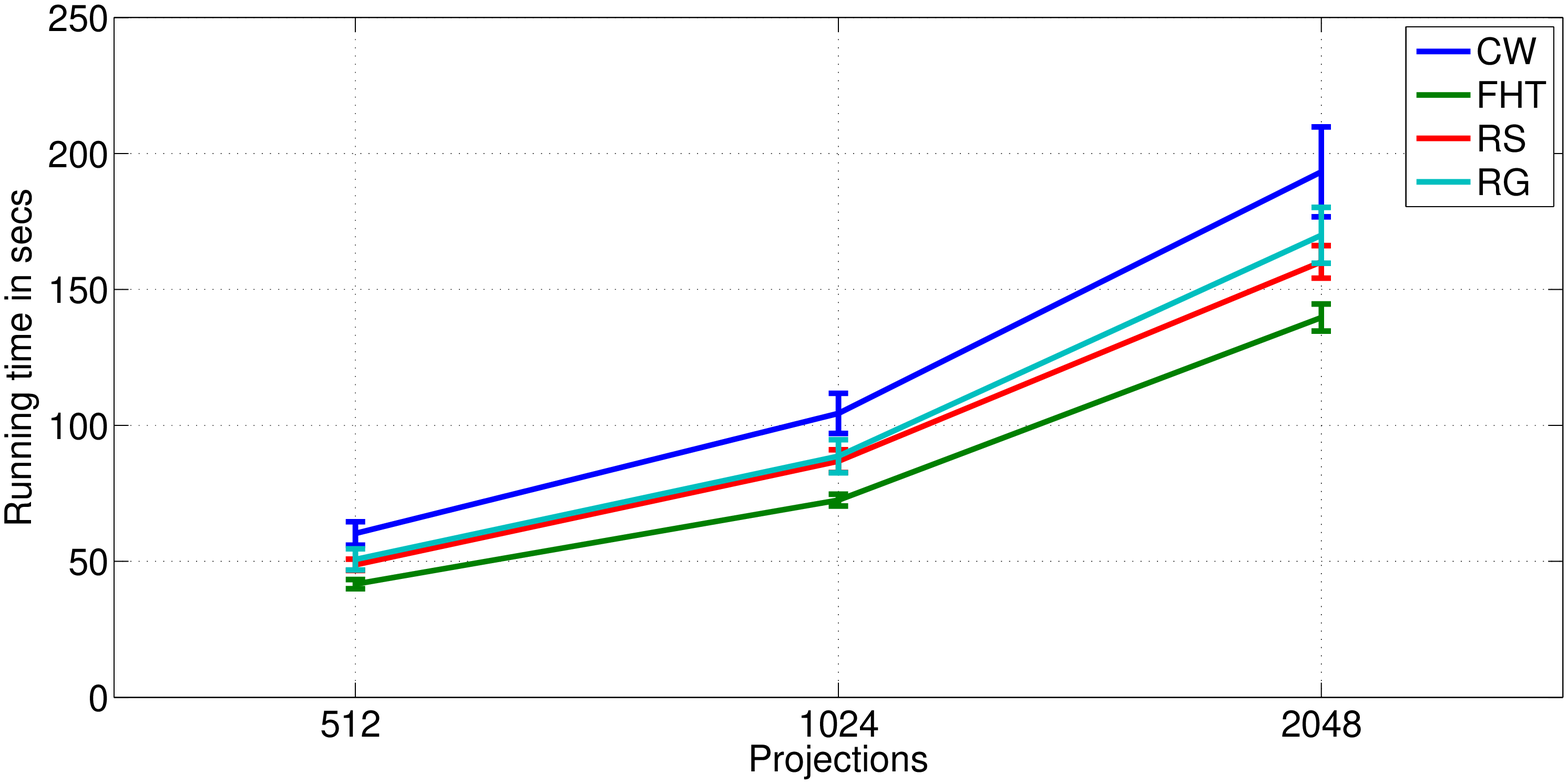}}
Total running time: regional classification
\centerline{\includegraphics[height = 55mm,width=\columnwidth,clip]{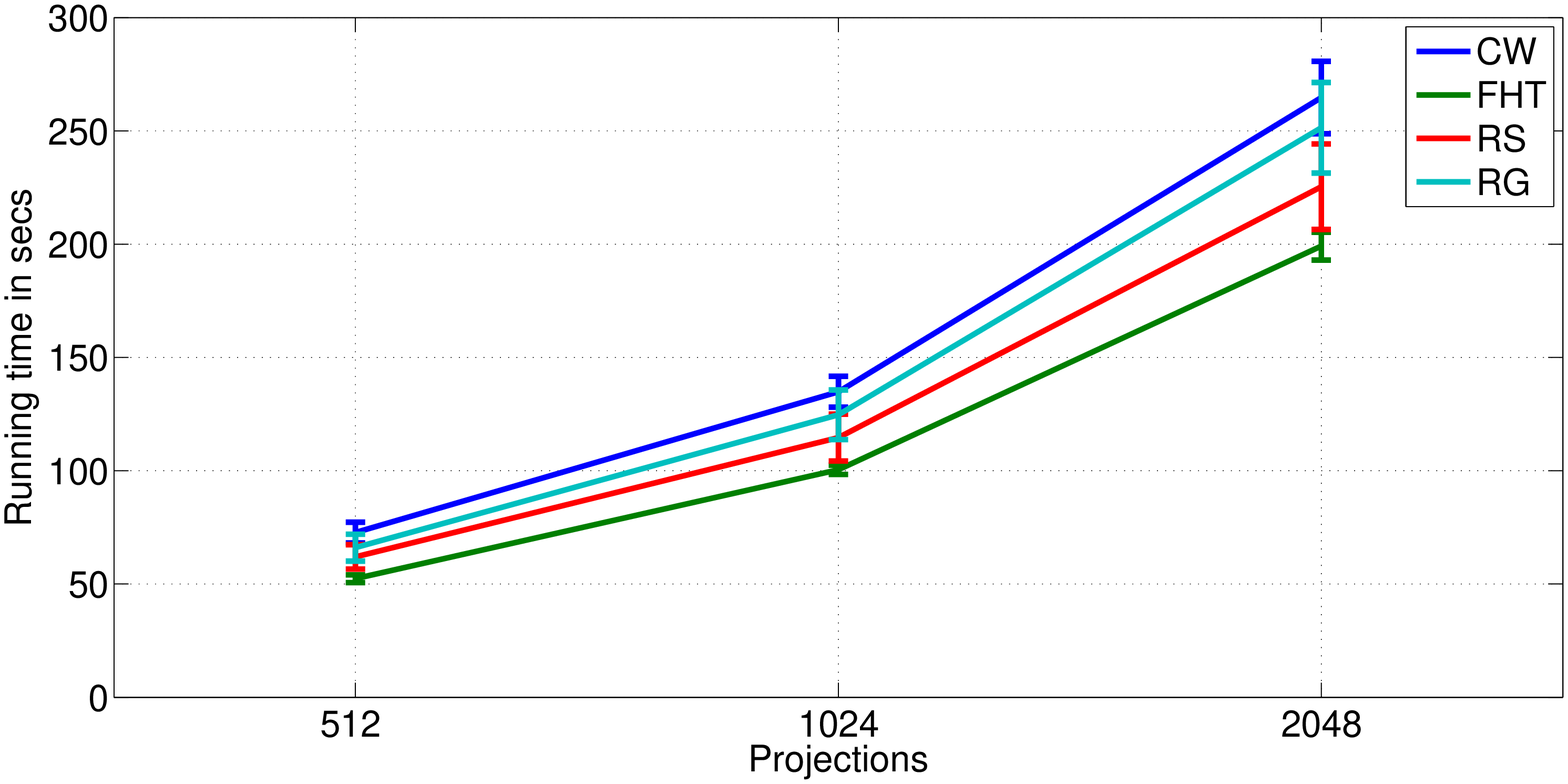}}
Total running time: population-level classification
\centerline{\includegraphics[height = 55mm,width=\columnwidth,clip]{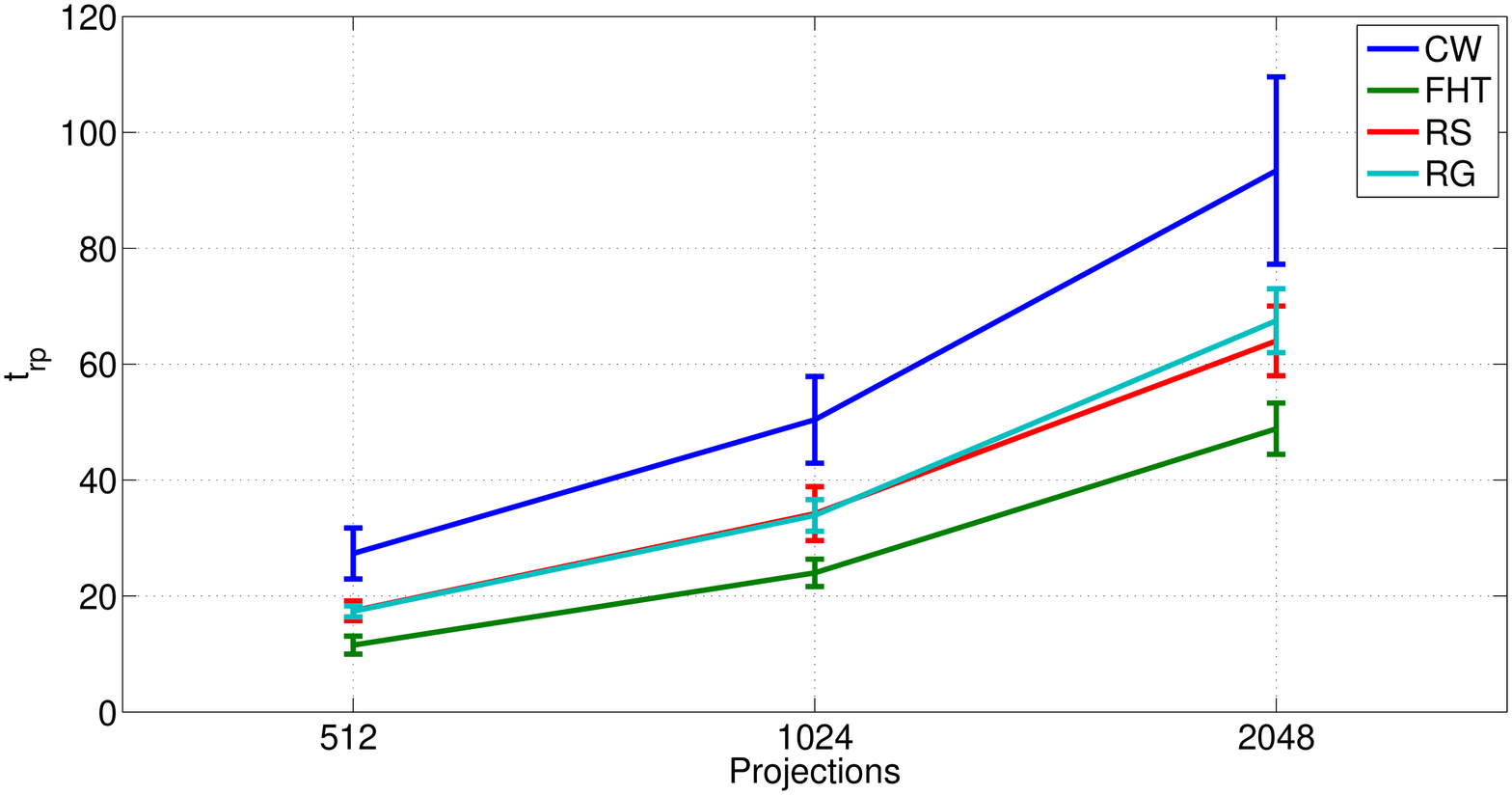}}
Time needed to compute random projections
\caption{\small{Total running time in seconds (random projections \textit{and} SVM classification on the dimensionally-reduced data) for Hapmap-HGDP dataset for four different projection methods using both regional and population-level labels. Notice that the time needed to compute random projection is independent of the classification labels. Vertical bars indicate standard-deviation, as in Figure~\ref{fig:snp_eout}.}}
\label{fig:snp_t}
\end{center}

\end{figure}
We set $r$ to $256$, $512$, $1024$, and $2048$ in our experiments. Since this task is a multi-class classification problem, we used LIBLINEAR's Crammer and Singer technique for classification. We ran two sets of experiments: in the first set, the classification problem is to assign samples to broad regions of origin, while in the second experiment, our goal is to classify samples into (fine-scale) populations. We set $C$ to 1,000 in LIBLINEAR for all the experiments. The in-sample error is zero in all cases. Figure~\ref{fig:snp_eout} shows the out-of-sample error for regions and populations classification, which are nearly identical for all four random projection methods. For regional classification, we estimated $\epsilon_{out}$ to be close to $2\%$, and for population-level classification, $\epsilon_{out}$ is close to $20\%$. This experiment strongly supports the \textbf{computational benefits} of our methods in terms of main memory. $\matX$ is a $2,250\times 492,516$ matrix, which is too large to fit into memory in order to run SVMs. Figure~\ref{fig:snp_t} shows that the combined running time for four different random projection methods are nearly identical for both regions and population classification tasks. However, the time needed to compute the random projections is different from one method to the next. FHT is fastest, followed by RS, RG and CW. In this particular case, the input matrix is dense, and CW seems to be outperformed by the other methods as the running time of CW depends on the number of non-zeros of the matrix. 

\subsubsection{The RCV1 dataset}\label{subsubsec:rcv1}
The RCV1 dataset \cite{DL04b}\footnote{ The RCV1 dataset is available publicly at http://www.csie.ntu.edu.tw/~cjlin/libsvmtools/datasets and contains a training-set and a test-set of predesignated size.} is a benchmark dataset on text categorization. We use the RCV1 dataset for both binary and multi-class classification tasks. The RCV1 binary classification dataset had one training set containing 20,242 data-points and 47,236 features. We generate ten different test-sets each containing 20,000 points and 47,236 features from the available test-set for the binary classification task. The RCV1 binary classification dataset contains CCAT and ECAT as the positive classes and GCAT and MCAT as the negative classes. Instances in both positive and negative classes are absent. The RCV1 multi-class dataset had one training set 15,564 training points with 47,236 features and ten different test-sets each containing 16,000 data points and 47,236 features were generated from the available test-set. There are 53 classes in the RCV1 multi-class dataset. We set $r$ to $2048$, $4096$ and $8192$. We use LIBLINEAR as our SVM solver. We use the L2-regularized L2-loss support vector classification in the primal mode with default settings for binary classification and the method of Crammer and Singer \cite{CS00} for multi-class classification. We set $C=10$ for both multi-class classification and binary classification. The RCV1 dataset is very sparse with 0.16\% non-zero entries in the binary-class dataset and 0.14\% non-zero entries in the multi-class dataset. 

Tables ~\ref{tab:rcv1_multiclass} and ~\ref{tab:rcv1_binaryclass} show the results for RCV1 binary and multi class datasets.$t_{svm}$ denotes the SVM-training time on the projected data. For both binary and multi-class tasks, we observe that $\epsilon_{out}$ is close to that of full dataset and $\epsilon_{out}$ decreases with increase in number of projections. The SVM training time is smaller than that of full dataset for CW method only. For the other methods like FHT, RS and RG, the number of non-zeros in the projected data increases which increases the SVM training time. This is evident from Figure~\ref{fig:rcv1_nnz} which shows the ratio of number of non-zeros of projected data and full data. For all methods except CW, the number of non-zeros increases with increase in value of $r$. This follows from the theory predicted in \citeN{Clark12}, since CW method takes advantage of input sparsity. The combined running time is smaller than that of full-dataset for CW method. The margin of the projected data is close to that of full data for RCV1 binary class dataset.
\begin{figure}[!tp]
\includegraphics[height = 40mm,width=0.5 \columnwidth,clip]{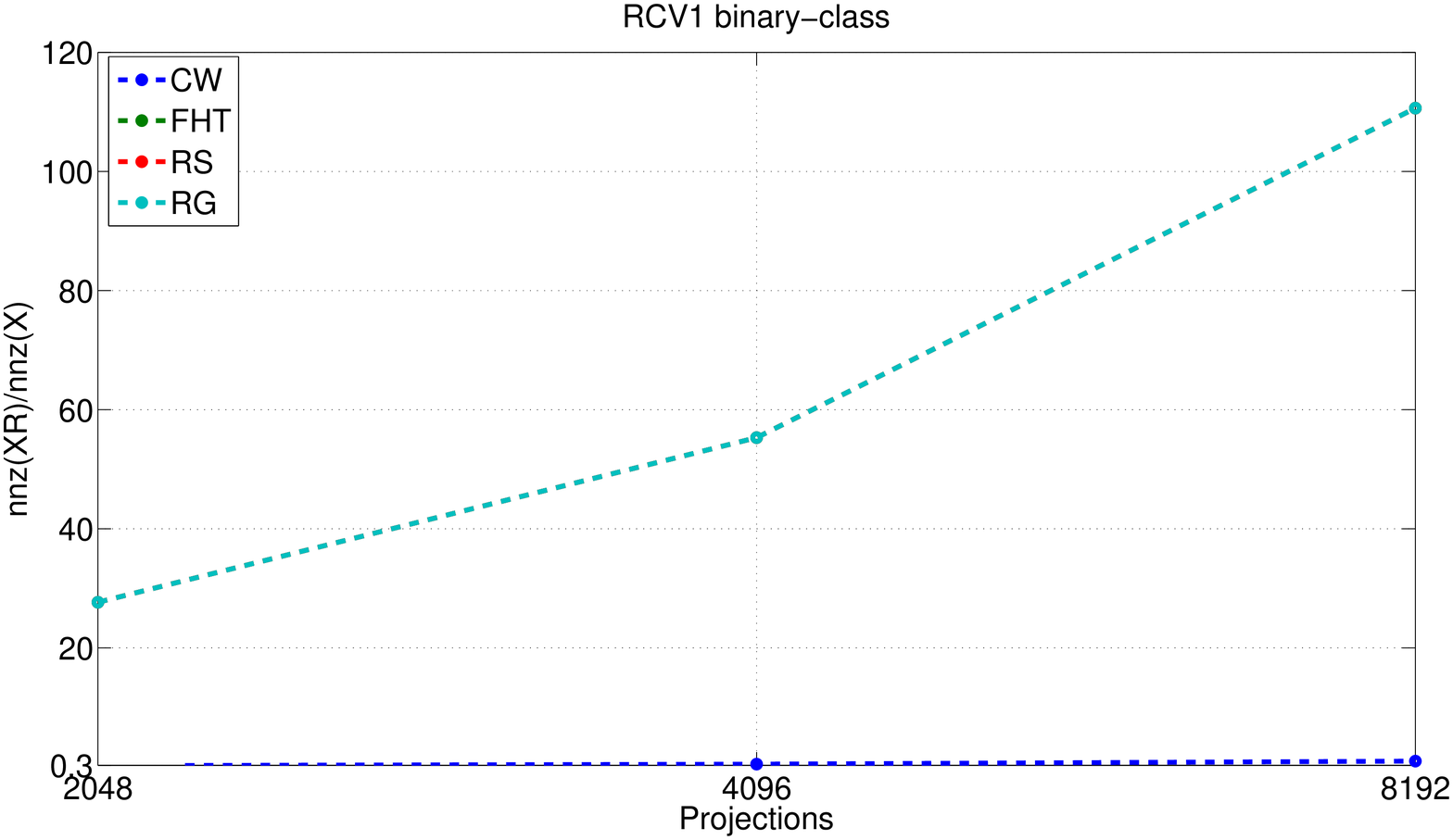}
\includegraphics[height = 40mm,width=0.5 \columnwidth,clip]{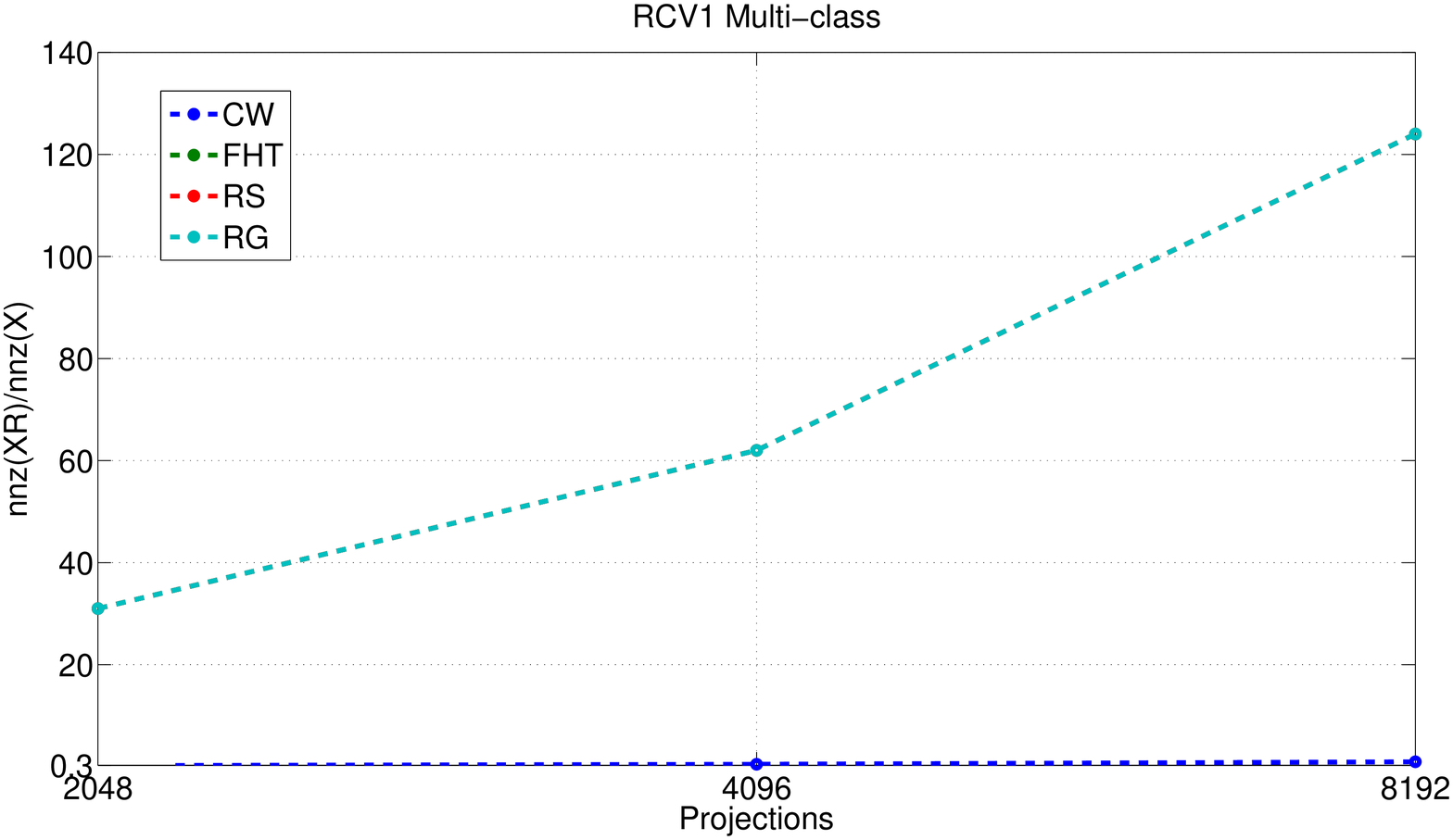}
\caption{\small{Ratio of number of non-zero entries of projected data and full-data for RCV1 dataset.}}
\label{fig:rcv1_nnz}
\end{figure}
%

\begin{table}[ht]
\tbl{RCV1 Dataset (Multi-class) \label{tab:rcv1_multiclass}}{

\begin{tabular}{|c||c|c|c|c|c|}
\hline
	\multicolumn{2}{|c|}{}&\multicolumn{3}{c|}{Projected Dimension \math{r}}&\\
	\cline{3-6}
	\multicolumn{2}{|c|}{}{}&2048 &4096 &8192 & \textbf{full} \\
	\hline
	\multirow{4}{*}{$\mathbf{\epsilon_{out}} $} &CW$(\mu)$ &18.30 &15.15  &13.50  &$\mathbf{11.83 }$  \\
							      &$(\sigma)$  &0.157  &0.117 &0.079  &$\mathbf{0.18}$\\
	\cline{2-6}
		                 &RS$(\mu)$  &17.57	&14.66  &13.21    &$\mathbf{11.83}$  \\
				&$(\sigma)$  &0.172	 &0.134  &0.1094  &$\mathbf{0.18}$\\
	\cline{2-6}				
		                 &FHT $(\mu)$  &17.50	&14.58 &13.2  &$\mathbf{11.83}$ \\
			& $(\sigma)$    &0.2033  &0.0905 &0.0597 &$\mathbf{0.18}$\\
	\cline{2-6}				
		                 &RG $(\mu)$ &17.47	&14.61  &13.21  &$\mathbf{11.83}$ \\
			& $(\sigma)$  &0.101	&0.146	 &0.036  &$\mathbf{0.18}$\\
	\hline\hline
	\multirow{4}{*}{$\mathbf{t_{rp} }$} &CW $(\mu)$  &0.0093	&0.0189	 &0.0357  &$\mathbf{--}$  \\
							& $(\sigma)$   &0.0005	&0.0007	&0.0010   &$\mathbf{--}$ \\
	\cline{2-6}
		                 &RS $(\mu)$   &2.412	&4.782	 &9.49  &$\mathbf{--}$ \\
				& $(\sigma)$     &0.0323	&0.0847	 &0.0636     &$\mathbf{--}$ \\		
	\cline{2-6}				
		                 &FHT $(\mu)$ 	&2.484	 &4.85 &9.966  &$\mathbf{--}$ \\
				&  $(\sigma)$   &0.1621	&0.023	 &0.23      &$\mathbf{--}$ \\
	\cline{2-6}				
		                 &RG $(\mu)$  &13.559	&28.875	&55.05	&$\mathbf{--}$ \\
				&  $(\sigma)$   &0.0715	&1.9801	 &1.4315  &$\mathbf{--}$ \\
	\hline\hline
	\multirow{4}{*}{$\mathbf{t_{svm} }$} &CW $(\mu)$ &1.99	&2.23	 &2.80	  &$\mathbf{2.96}$  \\
							& $(\sigma)$  &0.2309	&0.4030	 &0.2929   &$\mathbf{--}$ \\
	\cline{2-6}
		                 &RS $(\mu)$  &45.103	  &94.652  &207.278  &$\mathbf{2.96}$ \\
				&  $(\sigma )$  &2.82	&5.705	 &15.084       &$\mathbf{--}$ \\		
	\cline{2-6}				
		                 &FHT $(\mu)$  &45.468	 &98.287	&201.558 	&$\mathbf{2.96}$ \\
				&  $(\sigma)$  &3.143 &6.689	&10.765         &$\mathbf{--}$ \\
	\cline{2-6}				
		                 &RG $(\mu)$  	&47.458	&121.208	&263.392  	&$\mathbf{2.96}$ \\
				&  $(\sigma)$  &7.222	&23.393	&45.814        &$\mathbf{--}$ \\
	\hline
\end{tabular}}
\begin{tabnote}
\Note{RCV1 Multi-class: $\mu$ and $\sigma$ represents the mean and standard deviation of the results which have been averaged over ten different random projection matrices. Since there was one training set, there is no standard deviation for $t_{svm}$ of ``full" data.}
\end{tabnote}\
\end{table}	

\begin{table}[ht]
\tbl{RCV1 Dataset (binary-class) \label{tab:rcv1_binaryclass}}{

\begin{tabular}{|c||c|c|c|c|c|}
\hline
	\multicolumn{2}{|c|}{}&\multicolumn{3}{c|}{Projected Dimension \math{r}}&\\
	\cline{3-6}
	\multicolumn{2}{|c|}{}{}&2048 &4096 &8192 & \textbf{full} \\
	\hline
	\multirow{4}{*}{$\mathbf{\epsilon_{out}} $} &CW$(\mu)$  &12.90    &9.57    &6.54  &$\mathbf{4.51 }$  \\
							      &$(\sigma)$  &1.234   & 0.619    &0.190  &$\mathbf{0.1161}$\\
	\cline{2-6}
		                 &RS$(\mu)$  &9.26    &7.82   & 6.22   &$\mathbf{4.51}$  \\
				&$(\sigma)$  &0.134   & 0.116   & 0.077   &$\mathbf{0.1161}$\\
	\cline{2-6}				
		                 &FHT $(\mu)$  &9.23   & 7.94    &6.29    &$\mathbf{4.51}$ \\
			& $(\sigma)$   &0.166    &0.145    &0.150  &$\mathbf{0.1161}$\\
	\cline{2-6}				
		                 &RG $(\mu)$ &9.24    &7.75    &6.20  &$\mathbf{4.51}$ \\
			& $(\sigma)$   &0.255    &0.113    &0.109  &$\mathbf{0.1161}$\\
	\hline\hline
	\multirow{4}{*}{$\mathbf{\gamma }$} &CW $(\mu)$ &0.0123    &0.0094    &0.0125      &$\mathbf{0.0152}$  \\
							& $(\sigma)$   &0.0002	&0.0002    &0.0003  &$\mathbf{--}$ \\
	\cline{2-6}
		                 &RS $(\mu)$ &0.0158    &0.0105   & 0.0127   &$\mathbf{0.0152}$ \\
				& $(\sigma)$   &0.0005	&0.00009	&0.0001     &$\mathbf{--}$ \\		
	\cline{2-6}				
		                 &FHT $(\mu)$ 	&0.0160    &0.0105    &0.0127    &$\mathbf{0.0152}$ \\
				&  $(\sigma)$   &0.0003	 &0.0001	&0.0001      &$\mathbf{--}$ \\
	\cline{2-6}				
		                 &RG $(\mu)$   &0.0158    &0.0106    &0.0127   	&$\mathbf{ 0.0152}$ \\
				&  $(\sigma)$  &0.0004	&0.0001  &0.00009  &$\mathbf{--}$ \\
	\hline\hline
	\multirow{4}{*}{$\mathbf{t_{rp} }$} &CW $(\mu)$   &0.0112    &0.0231    &0.0456  &$\mathbf{--}$  \\
							& $(\sigma)$  &0.0008    &0.0014    &0.0008    &$\mathbf{--}$ \\
	\cline{2-6}
		                 &RS $(\mu)$   &4.34   &8.94   &18.89   &$\mathbf{--}$ \\
				& $(\sigma)$  &0.48    &1.33    &3.48      &$\mathbf{--}$ \\		
	\cline{2-6}				
		                 &FHT $(\mu)$ 	 &4.29   &8.21   &16.37  &$\mathbf{--}$ \\
				&  $(\sigma)$    &0.577    &1.003    &1.484       &$\mathbf{--}$ \\
	\cline{2-6}				
		                 &RG $(\mu)$  &20.52   &48.18  &86.384	&$\mathbf{--}$ \\
				&  $(\sigma)$   &0.248    &3.32   &6.15  &$\mathbf{--}$ \\
	\hline\hline
	\multirow{4}{*}{$\mathbf{t_{svm} }$} &CW $(\mu)$  &0.0726    &0.1512    &0.2909	  &$\mathbf{0.368}$  \\
							& $(\sigma)$  &0.0082    &0.0113    &0.009   &$\mathbf{--}$ \\
	\cline{2-6}
		                 &RS $(\mu)$  &9.22   &20.11   &41.29  &$\mathbf{0.368}$ \\
				&  $(\sigma )$   &0.9396    &3.245    &7.3966       &$\mathbf{--}$ \\		
	\cline{2-6}				
		                 &FHT $(\mu)$  &8.78   &18.95  &37.18	&$\mathbf{0.368}$ \\
				&  $(\sigma)$    &1.29    &2.79    &4.76      &$\mathbf{--}$ \\
	\cline{2-6}				
		                 &RG $(\mu)$  	&11.17  &30.17   &44.76   &$\mathbf{ 0.368}$ \\
				&  $(\sigma)$   &0.364   &4.204   &2.079       &$\mathbf{--}$ \\
	\hline
\end{tabular}}
\begin{tabnote}
\Note{RCV1 Binary-class: $\mu$ and $\sigma$ represents the mean and standard deviation of the results which have been averaged over ten different random projection matrices. Since there was one training set, there is no standard deviation for $t_{svm}$ and $\gamma$ of ``full" data.}
\end{tabnote}\
\end{table}

\subsection{PCA vs Random Projections}
Principal Components Analysis (PCA) constructs a small number of linear features that summarize the input data. PCA is computed by first mean-centering the features of the original data and then computing a low-rank approximation of the data matrix using SVD. Thus the PCA feature matrix is given by $\matZ = \matX_c\matV_k$, where $\matX_c$ represents the centered data matrix $\matX$ and $\matV_k$ represents the top $k$ right singular vectors of $\matX_c$. To the best of our knowledge, there is no known theoretical framework connecting PCA with margin or generalization error of SVM, so we only provide empirical evidence of the comparison.

Our goal is to evaluate if the data matrix represented by a small number of principal components can give the same or better performance than random projections when combined with SVMs, in terms of both running time and out-of-sample error. Note that the number of random projections is always greater than the rank of the matrix. For PCA we retain a number of principal components that is less than or equal to the rank of the matrix in order to compare its performance to random projections.

We used the TechTC300 dataset for experimental evaluation and used MATLAB's SVD solver in ``econ" mode to compute PCA. We kept $k$ equal to 32, 64, and $\rho$ (where $\rho$ is the rank of the matrix) principal components. The results corresponding to a number of principal components equal to the rank of the data matrices are referred to as ``full-rank'', while ``full'' refers to the results on the full-dimensional dataset. We ran PCA experiments on 294 datasets with $k=64$, since one of them had rank less than 64. For $k=32$, we used the entire set of 295 TechTC300 matrices. $t_{pca}$ denotes the time to compute PCA on the dataset and we set $C=500$ and $C=1$ in our experiments. The out-of-sample error for PCA is equal to or sometimes slightly better compared to the error on the full-dimensional datasets. Even though a smaller number of principal components achieve better out-of-sample error when compared to random projections, the combined running time of SVMs and PCA  is typically higher than that of random projections. The combined running time of SVMs and PCA is sensitive to the value of C, while the running time for random projections and SVM do not vary greatly, like PCA, by change of $C$.\footnote{ We repeated all experiments on TechTC-300 using $C=1$ and noticed the same pattern in the results as we did for $C=500$. RG and FHT are faster than the remaining two methods.}  Random projections are therefore a faster and more stable method than PCA. However, PCA appears to perform better than random projections when the number of components is equal to the rank of the matrix. Table \ref{tab:techtcpca} shows the results of PCA experiments; note that the standard deviation of $t_{run}$ for $C=500$ is quite high because of the varied running times of SVMs on the various TechTC300 matrices.

For a comparison of PCA to random projections, we consider the case of $r=512$ and the random gaussian matrix and randomized Hadamard Transform (see Table~\ref{tab:tabtechtc} for $C=500$), which has the best combined running time for random projections and SVMs. The combined running time of SVMs and ``full-rank'' PCA is smaller than that of RG (FHT) and SVMs by 0.19 (0.16) seconds, while the out-of-sample error of the former is only $4\%$ better. However, the time needed to compute PCA is 1.73 seconds, while random projections take negligible time to be computed; applying SVMs on the dimensionally-reduced matrix is the bottleneck of the computation. If PCA retains only 32 or 64 principal components, the running time of our method is smaller by factors of 30 and 5 respectively. 

For $C=1$ and $r=512$, SVMs and ``full-rank'' PCA is smaller than that of RG (FHT) and SVMs by 0.43 (0.15) seconds, while the out-of-sample error of the former is again $4\%$ better. For $C=1$, if PCA retains only 32 or 64 principal components, the running time of our method is smaller by a few seconds.

These clearly show the advantage of using random projections over PCA, especially when a small number of principal components is desired. The PCA feature matrix is sensitive to the value of $C$. For a higher value of $C$, SVM takes a longer time to train the inseparable data of the PCA feature matrix.

\begin{table}[!ht]
\tbl{TechTC300 PCA Experiments \label{tab:techtcpca}}{
\begin{tabular}{|c||c|c|c|c|c|}
\hline
	\multicolumn{2}{|c|}{C=500}&\multicolumn{3}{c|}{Projected Dimension \math{k}}&\\
	\cline{3-6}
	\multicolumn{2}{|c|}{}{}&32 &64 &\textbf{full-rank} & \textbf{full} \\
	\hline
	\multirow{1}{*}{$\mathbf{\epsilon_{out}} $} &PCA $(\mu)$ &15.02 &17.35 &17.35  &$\mathbf{17.35}$  \\
							      &$(\sigma)$ &9.32 &9.53 &9.45  &$\mathbf{9.45}$\\
	\hline\hline
	\multirow{1}{*}{$\mathbf{t_{pca} }$} &PCA $(\mu)$  &1.73 &1.73 &1.73 &$\mathbf{--}$  \\
							& $(\sigma)$    &1.15 &1.15 &1.15 &$\mathbf{--}$ \\
	
	\hline\hline
	\multirow{1}{*}{$\mathbf{t_{run} }$} &PCA $(\mu)$  &86.73  &14.04 &2.67 &$\mathbf{4.85}$  \\
							& $(\sigma)$   &174.35  &81.12 &1.56 &$\mathbf{2.12}$ \\
	\hline
\end{tabular}
\quad
\begin{tabular}{|c||c|c|c|c|c|}
\hline
	\multicolumn{2}{|c|}{C=1}&\multicolumn{3}{c|}{Projected Dimension \math{k}}&\\
	\cline{3-6}
	\multicolumn{2}{|c|}{}{}&32 &64 &\textbf{full-rank} & \textbf{full} \\
	\hline
	\multirow{1}{*}{$\mathbf{\epsilon_{out}} $} &PCA $(\mu)$  &13.33   &15.72   &17.21  &$\mathbf{17.21}$  \\
							      &$(\sigma)$  &8.10    &9.20    &9.44  &$\mathbf{9.44}$\\
	\hline\hline
	\multirow{1}{*}{$\mathbf{t_{pca} }$} &PCA $(\mu)$   &1.73 &1.73 &1.73  &$\mathbf{--}$  \\
							& $(\sigma)$    &1.15 &1.15 &1.15  &$\mathbf{--}$ \\
	
	\hline\hline
	\multirow{1}{*}{$\mathbf{t_{run} }$} &PCA $(\mu)$  &5.70    &2.99    &2.80   &$\mathbf{4.92}$  \\
							& $(\sigma)$  &10.68    &5.83    &1.62  &$\mathbf{2.16}$ \\
	\hline
\end{tabular}
}
\begin{tabnote}
\Note{PCA Experiments:}{Results on the TechTC300 dataset, averaged over all data matrices using PCA. The table shows how \math{\epsilon_{out}}, \math{t_{pca}} (in seconds), and \math{t_{run}} (in seconds) depend on $r$. $\mu$ and $\sigma$ indicate the mean and the standard deviation of each quantity over the data matrices and ten ten-fold cross-validation experiments.}
\end{tabnote}
\end{table}	

\subsection{Experiments on SVM regression}
We describe experimental evaluations on real world datasets, namely Yalefaces dataset \cite{CHHZ06} and a gene expression dataset (NCI60 \cite{nci60}). We convert the multi-label classification tasks into a regression problem. We use LIBSVM with default settings and use $C=1$ in all our experiments. The general observations from our experiments are as follows: (i) the combined runtime of random projections and SVM is smaller than the runtime of SVM on full dataset, (ii) the margin increases with an increase in the number of projections.

\subsubsection{Yalefaces Dataset}
\label{subsec:yale}
The Yalefaces dataset \cite{CHHZ06} consists of 165 grayscale images of 15 individuals. There were eleven images per subject, one per different facial expression (happy, sad, etc) or configuration (center-light, left-light, etc). The dataset has 165 datapoints and 4,096 features with 15 classes. The classes were used as the labels for regression. We set the value of $r$ to 256, 512, and 1024. $t_{run}$ for SVM and random projections is approximately 9, 7 and 4 times smaller than that of full-dataset. The margin increases as the number of random projections increases. The mean-squared in-sample error decreases with an increase in the number of random projections.

\begin{table}[!ht]

\tbl{Yalefaces Dataset \label{tab:yale}}{
\begin{tabular}{|c||c|c|c|c|c|}
\hline
	\multicolumn{2}{|c|}{}&\multicolumn{3}{c|}{Projected Dimension \math{r}}&\\
	\cline{3-6}
	\multicolumn{2}{|c|}{}{}&256 &512 &1024 & \textbf{full} \\
	\hline
	\multirow{4}{*}{$\mathbf{\epsilon_{in}} $} &CW$(mse)$ &0.2697    &0.0257    &0.0103   &$\mathbf{0.0098}$  \\
							      &$(\beta)$    &0.9859    &0.9987    &0.9995     &$\mathbf{0.9995}$\\
	\cline{2-6}
		                 &RS$(mse)$   &0.2201    &0.0187    &0.0107    &$\mathbf{0.0098 }$  \\
				&$(\beta)$   &0.9886    &0.9991   &0.9995       &$\mathbf{0.9995}$\\
	\cline{2-6}				
		                 &FHT $(mse)$   &0.2533    &0.0233    &0.0102    &$\mathbf{0.0098}$ \\
			& $(\beta)$    &0.9868    &0.9988    &0.9995     &$\mathbf{0.9995}$\\
	\cline{2-6}				
		            &RG $(mse)$   &0.281    &0.0174    &0.0105    &$\mathbf{0.0098}$ \\
			& $(\beta)$    &0.9853    &0.9991    &0.9995     &$\mathbf{0.9995}$\\

	\hline\hline
	\multirow{4}{*}{$\mathbf{\gamma}$ } &CW $(\mu)$   &0.11    &0.12    &0.13    &$\mathbf{0.14}$  \\
							&  $(\sigma)$  &0.0032    &0.0028    &0.0020  & $\mathbf{ 0.0031}$ \\
	\cline{2-6}
		                 &RS $(\mu)$  &0.11    &0.12    &0.13  &$\mathbf{0.14}$ \\
				& $(\sigma )$   &0.0043    &0.0024    &0.0018   &$\mathbf{ 0.0031}$ \\		
	\cline{2-6}				
		                 &FHT $(\mu)$  	&0.11    &0.12    &0.13       &$\mathbf{0.14}$ \\
				&  $(\sigma)$  &0.0026    &0.0022    &0.0031     &$\mathbf{ 0.0031}$ \\
	\cline{2-6}		 
				&RG $(\mu)$  	&0.11    &0.12    &0.13       &$\mathbf{0.14}$ \\
				&  $(\sigma)$  &0.0034    &0.0030    &0.0025     &$\mathbf{ 0.0031}$ \\
	\hline\hline
	\multirow{4}{*}{$\mathbf{t_{run} }$} &CW $(\mu)$   &4.14    &5.06    &8.62    &$\mathbf{34.93}$  \\
							& $(\sigma)$   &0.17    &0.12    &0.24   &$\mathbf{0.02}$ \\
	\cline{2-6}
			&RS $(\mu)$   &3.92    &4.81    &8.45   &$\mathbf{34.93}$ \\
				&  $(\sigma)$    &0.17    &0.10    &0.19       &$\mathbf{0.02}$ \\    	
	\cline{2-6}				
  			&FHT $(\mu)$  &4.12    &5.09    &8.69  &$\mathbf{34.93}$ \\
				&  $(\sigma )$    &0.23    &0.17    &0.25     &$\mathbf{0.02}$ \\
	\cline{2-6}				
  			&RG $(\mu)$  &4.37    &5.49    &9.43  &$\mathbf{34.93}$  \\
				&  $(\sigma )$    &0.15    &0.09    &0.06     &$\mathbf{0.02}$ \\
	\hline
\end{tabular}}
\begin{tabnote}
\Note{Yalefaces Dataset:}{Results on the Yalefaces dataset using four different random projection methods. The table shows how \math{\epsilon_{in}}, \math{\gamma} and \math{t_{run}} (in seconds) depend on $r$. $mse$ and $\beta$ indicate the mean-squared error and the squared correlation coefficient, while $\mu$ and $\sigma$ represent the mean and standard deviation over ten ten-fold cross-validation experiments, and ten choices of random projection matrices for the four methods that we investigated.}
\end{tabnote}
\end{table}

\subsubsection{NCI60 Dataset}
\label{subsec:nci}
The NCI60 dataset \cite{nci60} consists of 1375 gene expression profiles of 60 human cancer cell lines. The dataset contains 1375 features and 60 datapoints with ten classes. The features contain the log-ratio of the expression levels. The classes were used as labels for regression. We set the value of $r$ to 128, 256, and 512. The running time of the four methods are nearly the same. The squared correlation-coefficient is very close to one and is not influenced by the number of projections, $r$. The mean squared $\epsilon_{in}$ remains the same for all values of $r$.

\begin{table}[!ht]
\tbl{NCI60 Dataset \label{tab:nci60}}{
\begin{tabular}{|c||c|c|c|c|c|}
\hline
	\multicolumn{2}{|c|}{}&\multicolumn{3}{c|}{Projected Dimension \math{r}}&\\
	\cline{3-6}
	\multicolumn{2}{|c|}{}{}&128 &256 &512 & \textbf{full} \\
	\hline
	\multirow{4}{*}{$\mathbf{\epsilon_{in}} $} &CW$(mse)$  &0.0098    &0.0097    &0.0097  &$\mathbf{0.0097}$  \\
							      &$(\beta)$    &0.9987    &0.9989    &0.9989    &$\mathbf{0.9990}$\\
	\cline{2-6}
		                 &RS$(mse)$ &0.0098     &0.0097     &0.0097     &$\mathbf{0.0097}$  \\
				&$(\beta)$  &0.9987     &0.9988     &0.9990     &$\mathbf{0.9990}$\\
	\cline{2-6}				
		                 &FHT $(mse)$   &0.0097     &0.0098     &0.0097    &$\mathbf{0.0097}$ \\
			& $(\beta)$    &0.9987     &0.9988     &0.9989    &$\mathbf{0.9990}$\\
	\cline{2-6}				
		                 &RG $(mse)$   &0.0098     &0.0098     &0.0097    &$\mathbf{0.0097}$ \\
			& $(\beta)$    &0.9987     &0.9988     &0.9989    &$\mathbf{0.9990}$\\
	\hline\hline
	\multirow{4}{*}{$\mathbf{\gamma}$ } &CW $(\mu)$   &1.89    &2.12    &2.22     &$\mathbf{2.33}$  \\
							&  $(\sigma)$  &0.05    &0.07    &0.07  & $\mathbf{0.12}$ \\
	\cline{2-6}
		                 &RS $(\mu)$    &1.90    &2.09    &2.25   &$\mathbf{2.33 }$ \\
				& $(\sigma )$  &0.10    &0.06    &0.07    &$\mathbf{0.12}$ \\		
	\cline{2-6}				
		                 &FHT $(\mu)$  	 &1.88    &2.10    &2.22   &$\mathbf{2.33}$ \\
				&  $(\sigma)$    &0.10    &0.09    &0.05   &$\mathbf{0.12}$ \\
	\cline{2-6}				
		                 &RG $(\mu)$  	 &1.87    &2.12    &2.20   &$\mathbf{2.33}$ \\
				&  $(\sigma)$    &0.10    &0.10    &0.06   &$\mathbf{0.12}$ \\
	\hline\hline
	\multirow{4}{*}{$\mathbf{t_{run} }$} &CW $(\mu)$  &0.26    &0.39    &0.76    &$\mathbf{2.19}$  \\
							& $(\sigma)$  &0.01    &0.01    &0.01   &$\mathbf{0.003}$ \\
	\cline{2-6}
			&RS $(\mu)$  &0.27    &0.40    &0.78  &$\mathbf{2.19}$ \\
				&  $(\sigma)$    &0.02    &0.01    &0.01      &$\mathbf{0.003}$ \\    	
	\cline{2-6}				
  			&FHT $(\mu)$   &0.29    &0.43    &0.85   &$\mathbf{2.19}$ \\
				&  $(\sigma )$    &0.04    &0.05    &0.10     &$\mathbf{0.003}$ \\
	\cline{2-6}				
  			&RG $(\mu)$   &0.26    &0.38    &0.74   &$\mathbf{2.19}$ \\
				&  $(\sigma )$    &0.02    &0.04    &0.05     &$\mathbf{0.003}$ \\
	
	\hline
\end{tabular}}
\begin{tabnote}
\Note{NCI60 Dataset:}{Results on the NCI60 dataset using four different random projection methods. The table shows how \math{\epsilon_{in}}, \math{\gamma} and \math{t_{run}} (in seconds) depend on $r$. See caption of Table ~\ref{tab:yale} for an explanation of $mse$, $\beta$, $\mu$ and $\sigma$.}
\end{tabnote}
\end{table}	

\section{Conclusions and open problems}
We present theoretical and empirical results indicating that random projections are a useful dimensionality reduction technique for SVM classification and regression problems that handle sparse or dense data in high-dimensional feature spaces.
Our theory predicts that the dimensionality of the projected space (denoted by $r$) has to grow essentially \textit{linearly} (up to logarithmic factors) in $\rho$ (the rank of the data matrix) in order to achieve relative error approximations to the margin and the radius of the minimum ball enclosing the data. Such relative-error approximations imply excellent generalization performance. However, our experiments show that considerably smaller values for $r$ results in classification that is essentially as accurate as running SVMs on all available features, despite the fact that the matrices have full numerical rank. This seems to imply that our theoretical results can be improved. 
We implemented and tested random projection methods that work well on dense matrices (the RS and FHT methods of Section~\ref{subsec:prel}), as well as a very recent random projection method that works well with sparse matrices (the CW method of Section~\ref{subsec:prel}). We also experimented with different SVM solvers for lage and medium-scale datasets. As expected, FHT, RG and RS work well on dense data while CW is an excellent choice for sparse data, as indicated by the SVM classification experiments.
For large-scale sparse data, CW is the method of choice as the other methods outweigh the benefits of performing random projections. 
For SVM regression experiments, the combined running times using the four methods are the same for dense datasets. The mean squared error and the squared correlation-coefficient of $\epsilon_{in}$ of the projected data are non-zero as opposed to the SVM classification experiments. Finally, we compare random projections with a popular method of dimensionality reduction, namely PCA and see that the combined running time of random projection and SVM is faster than that of SVM and PCA, with a slightly worse out-of-sample error. 
All our experiments are on matrices of approximately low-rank, while the theory holds for matrices of exactly low-rank. It is not known if the theory extends to matrices of approximately low rank. This is an open problem and needs further investigation.

\bibliographystyle{ACM-Reference-Format-Journals}
\bibliography{references}

\received{April 2013}{October 2013}{December 2013}

\end{document}